\definecolor{uniform1}{HTML}{1B9E77}
\definecolor{uniform2}{HTML}{D95F02}
\definecolor{uniform3}{HTML}{7570B3}
\definecolor{poisson}{HTML}{E7298A}
\newcommand{\ignore}[1]{}
\newcommand{\EA}{$(\mu+1)$-EA\xspace}
\begin{document}

\title{Evolutionary Diversity Optimization and the Minimum Spanning Tree Problem}

\keywords{Evolutionary algorithms, evolutionary diversity optimization, runtime analysis, minimum spanning tree}

\author{Jakob Bossek}
\affiliation{%
  \institution{Statistics and Optimization\\Dept. of Information Systems\\University of M\"unster}
  \city{M\"unster, Germany}
  \country{}
}

\author{Frank Neumann}
\affiliation{%
  \institution{Optimisation and Logistics\\School of Computer Science\\The University of Adelaide}
  \city{Adelaide, Australia}
  \country{}
}

\begin{abstract}
In the area of evolutionary computation the calculation of diverse sets of high-quality solutions to a given optimization problem has gained momentum in recent years under the term evolutionary diversity optimization.
Theoretical insights into the working principles of baseline evolutionary algorithms for diversity optimization are still rare. In this paper we study the well-known Minimum Spanning Tree problem (MST) in the context of diversity optimization where population diversity is measured by the sum of pairwise edge overlaps. Theoretical results provide insights into the fitness landscape of the MST diversity optimization problem pointing out that even for a population of $\mu=2$ fitness plateaus (of constant length) can be reached, but nevertheless diverse sets can be calculated in polynomial time. We supplement our theoretical results with a series of experiments for the unconstrained and constraint case where all solutions need to fulfill a minimal quality threshold. Our results show that a simple $(\mu+1)$-EA can effectively compute a diversified population of spanning trees of high quality.
\end{abstract}

\maketitle


\section{Introduction}
\label{sec:sec1}

Evolutionary algorithms and other bio-inspired algorithms have successfully been applied to a wide range of challenging design and optimization problems.
Evolutionary algorithms use a population of search points in order to  solve a given problem. In the area of single-objective optimization the classical goal is to produce a single solution that maximizes or minimizes a given function $f$.
Diversity plays a crucial role in the design of evolutionary algorithms as it prevents the algorithm from getting stuck in a single local optima. It also enables the use of crossover which assumes that two good and different solutions can be combined to a new even better solution.

Evolutionary diversity optimization aims to produce for a given optimization problem a set of high quality and diverse solutions~\cite{DBLP:conf/gecco/UlrichT11,DBLP:journals/corr/abs-1802-05448,DBLP:conf/gecco/NeumannG0019}. A similar approach is taken in the context of quality diversity~\cite{lehman2011evolving,pugh2016extended} where high quality designs with different properties are sought.
Evolutionary diversity optimization has been applied in the context of creating diverse images with respect to various features~\cite{DBLP:conf/gecco/AlexanderKN17} as well as to the task of evolving instances for the Traveling Salesperson Problem (TSP) that show performance differences for algorithms solving the TSP~\cite{DBLP:conf/ppsn/GaoNN16,Bossek2019Evolving}. Such instances are important for automated algorithm selection and configuration~\cite{DBLP:journals/ec/KerschkeHNT19}.
Recently, evolutionary diversity optimization has also been applied to create a diverse set of high quality tours for the TSP~\cite{Do2020EvolvingDiverseTSPTours}.

In terms of theoretical foundations of evolutionary computing techniques, the area of runtime analysis has played a crucial role over the last 25 years. We refer to \cite{BookDoeNeu,DBLP:books/daglib/0025643,ncs/Jansen13} for comprehensive presentations on this research area.
So far, the effect of populations has been analyzed with respect to benefits or detrimental effects for solving single-objective optimization problems~\cite{DBLP:journals/ec/Storch08,DBLP:journals/algorithmica/DangJL17}. Furthermore, populations play a crucial role in providing benefits to multi-objective approaches. Recent studies in the context of Pareto optimization have shown that populations are highly beneficial for optimizing monotone submodular functions with different types of constraints~\cite{DBLP:journals/ec/FriedrichN15,DBLP:conf/nips/QianYZ15,DBLP:conf/ijcai/QianSYT17,DBLP:conf/aaai/RoostapourN0019,DBLP:conf/ppsn/NeumannN20}.
In the context of evolutionary diversity optimization, initial results have been obtained for the classical functions OneMax and LeadingOnes~\cite{DBLP:conf/gecco/GaoN14}. Furthermore, special instances of the vertex cover has been investigated and runtime results have been achieved~\cite{DBLP:conf/gecco/GaoPN15}. Still, time complexity results in the context of evolutionary diversity optimization are rare. This is due to the fact that the interactions of individuals in a population together with a measure of diversity is very hard to analyse.
However, theoretical foundations on well-studied combinatorial optimization problem are of utmost importance in order to get a deeper understanding of the fitness landscapes posed by these kind of problems and implied challenges for randomized search heuristics such as evolutionary algorithms.

\subsection{Our contribution}
We contribute to the theoretical analysis of evolutionary diversity optimization and contribute to the highly challenging task of understanding the interactions among individuals in a population when carrying out evolutionary diversity optimization.
In this paper, we consider evolutionary diversity optimization for the minimum spanning tree problem. This classical problem can be solved with polynomial-time algorithms, e.g. Kruskal~\cite{Kruskal56} or Prim~\cite{Prim57}. The MST problem has received significant attention in the area of runtime analysis over the years. Furthermore, different variants of the minimum spanning tree problem such as the multi-objective minimum spanning tree problem and degree or diameter constrained versions are $\mathcal{NP}$-hard.

In the evolutionary diversity optimization process we minimise pairwise edge overlap in the setting where $1\leq \mu \leq \lfloor\frac{n}{2}\rfloor$ which allows for a decomposition into a set of $\mu$ edge-disjoint -- and hence maximally diverse -- spanning trees.
We are interested in obtaining a diverse set of spanning trees in this context and start our investigations for the case where all edges have the same cost. This implies that we are only interested in a diverse population without imposing a quality criterion.
Our theoretical investigations point out structural properties when optimising the diversity of spanning trees using evolutionary algorithms and we give a first runtime analysis for the case of a population size of~$2$ (Theorem~\ref{thm:EA_mu_equal_two_runtime}).
Afterwards, we carry out experimental investigations for mutation-based evolutionary diversity optimization approaches to see whether simple operators are able to achieve maximal diversity when all edge weights are equal. We show that a
$(\mu+1)$-EA using single edge exchanges is efficient in finding a $\mu$-size population if no quality requirements are required. The process can be made more efficient if the number of edge exchanges is drawn from a Poisson distribution which enables several edge exchanges in one mutation step.
Furthermore, we observe that with $\mu \to \lfloor\frac{n}{2}\rfloor$ maximum overlap diversity goes hand in hand with reduced diversity with respect to interesting tree properties like the maximum degree or the diameter.
Having gained insights into the behavior for the unconstrained case, we turn to the constrained case where we require that each solution in the population can have weight at most $(1+\alpha)\cdot$OPT. Here OPT is the weight of a MST and $\alpha>0$ is a parameter that determines the maximum cost of a spanning tree to be deemed of high quality. We observe that the choice of $\alpha$ is crucial for the amount of diversity that can be achieved as small values of $\alpha$ determine that only a small number of (often very similar) spanning trees meet the quality criterion.

The paper is structured as follows. We introduce the minimum spanning tree problem and the evolutionary diversity optimization approach that we use in this article in Section~\ref{sec:sec2}. We investigate some structural properties for diversity optimization of spanning trees and present some runtime results subsequently in Section~\ref{sec:sec3}. Afterwards, we carry out experimental investigations in terms of the diversity optimization process for the unweighted case (Section~\ref{sec:sec4}) as well as evolutionary diversity optimization for the weighted case under quality restrictions (Section~\ref{sec:sec5}). We finish in Section~\ref{sec:sec6} with some concluding remarks and directions for future work.


\section{Problem Formulation}
\label{sec:sec2}

Let $G=(V,E)$ be a undirected, connected graph with node set $V$, edge set $E$ and cost function $c : E \to \mathbb{R^{+}}$ which assigns positive real-valued costs to each edge. In the following we denote by $n = |V|, m = |E|$ the size of the node and edge set respectively. Furthermore, we frequently use the notation $E(G)$ to refer to the edge set of graph $G$. We also frequently identify a graph simply by his edge set $E$ for convenience.
The Minimum Spanning Tree (MST) problem is a fundamental combinatorial optimization problem on graphs with countless applications~\cite{cormen01introduction}.
Each connected acyclic sub-graph $T = (V, E')$ with $E' \subset E$ is a spanning-tree (ST). A ST $T^{*}$ is a minimum spanning tree if its sum of weights is minimal across the set of all possible spanning trees $\mathcal{T}$, i.e.
\begin{align*}
    T^{*} = \text{arg\,min}_{T \in \mathcal{T}} \sum_{e \in E(T)} c(e).
\end{align*}
The MST problem is solvable in polynomial time, e.g. by Prim's algorithm~\cite{Prim57}.

\subsection{Evolutionary Diversity Optimization for the MST}

\begin{algorithm}[t]
Initialise the population $P$ with $\mu$ spanning trees such that $c(T) \leq (1+\alpha)\cdot \text{OPT}$ for all $T \in P$.\\
Choose $T \in P$ uniformly at random and produce an offspring $T'$ of $T$ by applying a single one-edge-exchange mutation.\\
If $c(T') \leq (1+ \alpha)\cdot \text{OPT}$, add $T'$ to $P$. \\
If $|P| = \mu+1$, remove exactly one individual $T$, where $T=\arg \min_{S \in P} D(P\setminus \{S\})$, from $P$.\\
Repeat steps 2 to 4 until a termination criterion is reached.\\
Return $P$
\caption{Diversity maximising ($\mu + 1$)-EA}
\label{alg:ea}
\end{algorithm}

In this paper we investigate evolutionary diversity optimization for the MST, i.e. the goal is -- given an input graph $G$ -- to find a $\mu$-size population $P$ of spanning trees which is maximal with respect to some diversity measure. In the context of \emph{minimum} spanning trees all solutions need to fulfill a minimum-quality criterion. To this end we adopt the notion of approximation quality: a solution $T$ is within the quality threshold if $c(T) \leq (1+\alpha)\cdot\text{OPT}$ where parameter $\alpha>0$ steers the quality requirement and OPT is the value of an MST calculated beforehand.

Our study is based on a simple \EA which has been studied extensively in the context of diversity optimization~\cite{DBLP:conf/gecco/GaoN14,DBLP:conf/gecco/AlexanderKN17,DBLP:conf/gecco/NeumannG0019}. The algorithmic steps are outlined in Algorithm~\ref{alg:ea}. The \EA requires a population $P$ of $\mu=|P|$ spanning trees which all meet a given quality criterion, i.e. $c(T) \leq (1+\alpha)\cdot\text{OPT}$ for some $\alpha>0$ and all $T \in P$. This may result from the application of a classic MST-algorithm such as Prim~\cite{Prim57}. Next, a solution is sampled from $P$ uniformly at random. This solution is subject to a One-Edge-Exchange (1-EX) mutation to produce a single offspring $T'$. 1-EX has been subject of various studies in the context of (multi-objective) minimum spanning trees~\cite{RKJ2006_BiasedMutationOperators,BNG2019}. It has the pleasing characteristic of maintaining the spanning tree property. One step includes adding a random edge to the solution at hand; this step closes exactly one cycle. Finally, a random edge is deleted from this cycle to re-establish the spanning tree property. This implies that mutants are guaranteed to be STs again.
The algorithm verifies the mutants quality and, if met, adds $T'$ to the population. Eventually a single solution is required to be dropped to maintain a population size of $\mu$ STs. Here, the algorithm removes the individual with the least contribution to the diversity measure. These steps are repeated until a a termination condition is met, e.g. a population with known maximum diversity is found (if known) or the function evaluation limit is reached.

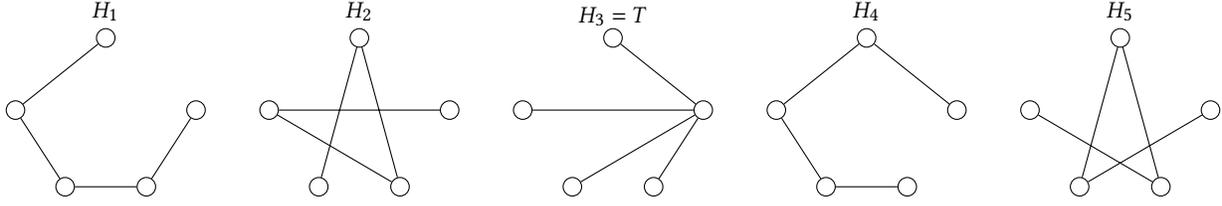
\begin{figure*}[ht]
\centering
\begin{tikzpicture}[scale=1.2] 
\begin{scope}[every node/.style={draw,circle,minimum size=7pt, inner sep=0pt}]
\node[label=above:{$H_1$}] (v1) at (0,0.8) {};
\node (v2) at (1,0) {};
\node (v3) at (0.45,-0.85) {};
\node (v4) at (-0.45,-0.85) {};
\node (v5) at (-1,0) {};
\end{scope}
\draw (v2) -- (v3) -- (v4) -- (v5) -- (v1);
\end{tikzpicture}
\hskip18pt
\begin{tikzpicture}[scale=1.2] 
\begin{scope}[every node/.style={draw,circle,minimum size=7pt, inner sep=0pt}]
\node[label=above:{$H_2$}] (v1) at (0,0.8) {};
\node (v2) at (1,0) {};
\node (v3) at (0.45,-0.85) {};
\node (v4) at (-0.45,-0.85) {};
\node (v5) at (-1,0) {};
\end{scope}
\draw (v2) -- (v5) -- (v3) -- (v1) -- (v4);
\end{tikzpicture}
\hskip18pt
\begin{tikzpicture}[scale=1.2] 
\begin{scope}[every node/.style={draw,circle,minimum size=7pt, inner sep=0pt}]
\node[label={[above,yshift=-0.32cm]:$H_3=T$}] (v1) at (0,0.8) {};
\node (v2) at (1,0) {};
\node (v3) at (0.45,-0.85) {};
\node (v4) at (-0.45,-0.85) {};
\node (v5) at (-1,0) {};
\end{scope}
\draw (v2) -- (v1);
\draw (v2) -- (v3);
\draw (v2) -- (v4);
\draw (v2) -- (v5);
\end{tikzpicture}
\hskip18pt
\begin{tikzpicture}[scale=1.2] 
\begin{scope}[every node/.style={draw,circle,minimum size=7pt, inner sep=0pt}]
\node[label=above:{$H_4$}] (v1) at (0,0.8) {};
\node (v2) at (1,0) {};
\node (v3) at (0.45,-0.85) {};
\node (v4) at (-0.45,-0.85) {};
\node (v5) at (-1,0) {};
\end{scope}
\draw (v2) -- (v1) -- (v5) -- (v4) -- (v3);
\end{tikzpicture}
\hskip18pt
\begin{tikzpicture}[scale=1.2] 
\begin{scope}[every node/.style={draw,circle,minimum size=7pt, inner sep=0pt}]
\node[label=above:{$H_5$}] (v1) at (0,0.8) {};
\node (v2) at (1,0) {};
\node (v3) at (0.45,-0.85) {};
\node (v4) at (-0.45,-0.85) {};
\node (v5) at (-1,0) {};
\end{scope}
\draw (v2) -- (v4) -- (v1) -- (v3) -- (v5);
\end{tikzpicture}
\caption{Illustration of set $H = \{H_1, \ldots, H_n\}$ for $n=5$ constructed in Theorem~\ref{thm:existence_population_max_min_difference}.}
\label{fig:proof_set_H}
\end{figure*}

\subsection{Diversity Measure}
Our diversity measure is based on the edge overlap of STs and has been recently used in a study on evolutionary diversity optimization for the Traveling Salesperson Problem~\cite{Do2020EvolvingDiverseTSPTours}. For two trees $T_1, T_2$ the \emph{(edge) overlap} is defined as the number of shared edges, i.e. $o(T_1, T_2) := |E(T_1) \cap E(T_2)|$. In our definition two trees are maximally diverse if they are \emph{edge-disjoint}, i.e. $o(T_1, T_2) = 0$ or equivalently -- as diversity is naturally to be maximised -- $(n-1) - o(T_1, T_2) = (n-1)$. We generalize this notion to a population $P = \{T_1, \ldots, T_{\mu}\}$ of $\mu$ STs by maximising the pairwise overlap
\begin{align}
\label{eq:population_overlap}
D_o(P) := \mu(\mu-1)(n-1) - \sum_{i=1}^{\mu}\sum_{j=1 \atop j \neq i}^{\mu} o(T_i, T_j).
\end{align}
Note that for fixed $n$ and $\mu$ the term $\mu(\mu-1)(n-1)$ is constant and maximising Eq.~\eqref{eq:population_overlap} is equivalent to minimising the pairwise overlap.

We next give a set of theoretical results pointing out that there exist populations which equalize the frequency of the edges. In particular, under the restriction $1 \leq \mu \leq \lfloor\frac{n}{2}\rfloor$, a population with maximum $D_o$-diversity, termed \emph{$D_o$-maximal} in the following, is achievable. Given a population $P$ we denote by $n(e, P) = |\{T \in P \mid e \in T\}|$ the number of STs the edge $e$ is part of. The goal is to equalize $n(e,P)$ for all $e \in E$.

\begin{theorem}
\label{thm:existence_population_max_min_difference}
For every complete graph $G=(V,E)$ with $n \geq 4$ nodes and every $\mu \geq 1$ there is a population $P$ of $\mu$ spanning trees with
\begin{align*}
\max_{e \in E}n(e, P) - \min_{e \in E}n(e,P) \in \{0, 1\}.
\end{align*}
\end{theorem}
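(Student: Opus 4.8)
The plan is to build one ``universal'' family of spanning trees of $K_n$ in which every edge occurs equally often, and then to obtain a population of any prescribed size $\mu$ by replicating this family and topping it up with a small, still near-balanced, sub-family. Throughout, call a collection of spanning trees \emph{near-balanced} if the map $e \mapsto n(e,\cdot)$ takes at most two consecutive values; this is exactly the property claimed, and it is preserved when one adds a collection in which every edge has the \emph{same} multiplicity.

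\textbf{Step 1: the gadget $H$.} If $n$ is even, let $H$ be a decomposition of $K_n$ into $N:=n/2$ pairwise edge-disjoint spanning trees (a classical fact, matching the arboricity bound $\lceil n/2\rceil$ together with the edge count $(n/2)(n-1)$); here every edge has multiplicity $\lambda=1$. If $n$ is odd, start from the classical decomposition of $K_n$ into $(n-1)/2$ edge-disjoint Hamiltonian cycles $C_1,\dots,C_{(n-1)/2}$. Fix a vertex $v^*$; each $C_i$ contains exactly two edges at $v^*$, and over all $i$ these pairs are disjoint and exhaust the $n-1$ edges incident to $v^*$. Deleting one of the two $v^*$-edges of $C_i$ gives a spanning tree $T_i$, deleting the other gives $T_i'$; adjoin the star $S$ centred at $v^*$. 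This yields $N:=n$ spanning trees. An edge not at $v^*$ lies in exactly the two trees coming from the cycle that contains it; an edge at $v^*$ lies in exactly one of the $T_i,T_i'$ and in $S$. Hence every edge has multiplicity $\lambda=2$. (For $n=5$ this reproduces $H_1,\dots,H_5$ of Figure~\ref{fig:proof_set_H}.) I also record for later that $H$ contains $\lfloor n/2\rfloor$ pairwise edge-disjoint spanning trees: the trees themselves when $n$ is even, and (say) $T_1,\dots,T_{(n-1)/2}$ when $n$ is odd, since $T_i\subseteq C_i$.

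\textbf{Step 2: reaching an arbitrary $\mu$.} Write $\mu=aN+b$ with $a\ge 0$ and $0\le b\le N-1$. Taking $a$ copies of $H$ adds the constant $a\lambda$ to $n(e,\cdot)$ for every edge, so it suffices to adjoin $b$ further spanning trees forming a near-balanced collection. If $b=0$, done. If $1\le b\le\lfloor n/2\rfloor$, take $b$ of the pairwise edge-disjoint trees from Step~1: every edge then has multiplicity $0$ or $1$. If $\lfloor n/2\rfloor<b\le N-1$ (possible only for odd $n$, and then $N-b\le\lfloor n/2\rfloor$), instead take all of $H$ but delete $N-b$ of the edge-disjoint trees $T_i$: removing $T_i$ lowers the multiplicity of each of its $n-1$ edges from $2$ to $1$, the deleted supports $E(T_i)$ are pairwise disjoint, and all other multiplicities remain $2$, so the $b$ surviving trees have every edge with multiplicity $1$ or $2$. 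In every case the adjoined part is near-balanced, hence so is the whole population, which proves the theorem.

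\textbf{Main obstacle.} The only real content is the odd-$n$ gadget: producing a $2$-fold edge cover of $K_n$ by exactly $n$ spanning trees. The naive rotational attempt (difference-$d$ Hamiltonian paths through a common vertex, as in the figure) is transparent only when $n$ is prime, since a step-$d$ walk is Hamiltonian precisely when $\gcd(d,n)=1$; routing instead through the Hamiltonian-cycle decomposition and repairing with a star sidesteps this number-theoretic issue and works for every odd $n$. Once $H$ is in hand, Step~2 is pure bookkeeping.
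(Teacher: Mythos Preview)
Your proof is correct and takes essentially the same approach as the paper: both build the identical gadget $H$ (an edge-disjoint spanning-tree decomposition for even $n$; for odd $n$, the two Hamiltonian paths obtained from each cycle of the Hamiltonian-cycle decomposition together with the star at the common vertex, giving every edge multiplicity~$2$) and then reach an arbitrary $\mu$ by replicating $H$ and appending a near-balanced remainder. The only difference is cosmetic: the paper handles the odd-$n$ remainder by specifying a single ordering $H_1,\dots,H_n$ whose every prefix is near-balanced, whereas you achieve the same effect with the two-case split $b\le\lfloor n/2\rfloor$ versus $b>\lfloor n/2\rfloor$.
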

\begin{proof}
We provide a constructive proof. The basic idea is to first generate a set $H$ of spanning trees and to fill $P$ with elements of $H$. We consider two cases: $n$ even and $n$ odd.
First let $n$ be even. Then there is a decomposition of $G$ into $h=\frac{n}{2}$ edge-disjoint $n$-vertex paths~\cite{Chartrand10986GraphsAndDigraphs}. Let $H$ be the set of those paths. Let $\mu=kh+r$ with $r \in [0, h)$. We put each $k$ copies of each $T \in H$ into $P$. Subsequently, we put an arbitrary subset $H'\subset H$ with $|H'|=r$ into $P$. Let $E(H')$ be the set of edges used by $H'$. Then we have $n(e, P)=k+1$ for all $e \in E(H')$ and $n(e,P)=k$ for all $e \notin E(H')$ yielding the claim. This completes the proof for even $n$.

Now let $n$ be odd. According to a well-known paper by Alspach et al.~\cite{Alspach1990} there is a decomposition of $G$ into $\lfloor\frac{n}{2}\rfloor = \frac{n-1}{2}$ edge-disjoint Hamiltonian cycles $C = \{C_1, \ldots, C_{(n-1)/2}\}$. Note that removing a single edge from a Hamiltonian cycle yields a Hamiltonian path which is a spanning tree. Since the cycles are edge-disjoint, so are the resulting spanning trees. Note further that each node in $G$ has degree~2 in each Hamiltonian cycle $C_i$. Let $v \in V$ be an arbitrary node and let $T$ be the star graph\footnote{A star graph is a spanning tree with a center node of degree $(n-1)$ and $(n-1)$ leaf nodes.} with center node $v$; $T$ is a spanning tree. We denote by $e_i^1$ and $e_i^{2}$ the two adjacent edges of $v$ for every cycle $C_i$. We now define a set $H = \{H_1, \ldots, H_n\}$ of $n$ spanning trees as follows:
\begin{itemize}
    \item $H_i = C_i \setminus \{e_i^1\}$ for $i = 1, \ldots, \frac{n-1}{2}$
    \item $H_{(n-1)/2+1} = T$ and
    \item $H_{(n-1)/2+1+i} = C_i \setminus \{e_i^2\}$ for $i=1, \ldots, \frac{n-1}{2}$.
\end{itemize}
$T$ is constructed in a way such that it shares each one edge with every spanning tree in $H\setminus \{T\}$. Likewise, the spanning trees $H_i$ and $H_{(n-1)/2+1+i}, i=1, \ldots, \frac{n-1}{2}$ share each $n-2$ edges. Note further that $H_1, \ldots, H_{(n-1)/2}$ are edge-disjoint and so are $H_{(n-1)/2+2}, \ldots, H_n$ (see Figure~\ref{fig:proof_set_H} for an illustration). As a consequence $n(e,H)=2$ for all $e \in E$. It follows by construction of $H$ that for each $1 \leq \mu \leq n$ the set $P_{\mu}=\{H_1, \ldots, H_{\mu}\} \subset H$ satisfies
\begin{align*}
    \alpha(P_{\mu}) := \max_{e \in E} n(e,P_{\mu}) - \min_{e \in E} n(e, P_{\mu}) \in \{0, 1\}.
\end{align*}
with $\alpha(P_n)=0$.

Now consider $\mu \geq 1$. We construct a $\mu$-size population $P$ by setting $P_i = H_{((i-1) \mod n)+1}$for $i = 1, \ldots, \mu$. The theorem follows from iterated application of the claim for the case $1 \leq \mu \leq n$.
\end{proof}

Following the proof of Theorem~\ref{thm:existence_population_max_min_difference} we can derive.

\begin{corollary}
\label{cor:existence_population_n_diff_zero}
For every complete graph $G=(V,E)$ with $|V|=n$ and every $k \geq 1$ there is a population of size $\mu=kn$ such that each edge is used exactly $2k$ times.
\end{corollary}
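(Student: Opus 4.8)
The plan is to reuse the two spanning-tree families already constructed in the proof of Theorem~\ref{thm:existence_population_max_min_difference} and to obtain the desired population simply by taking an appropriate number of copies of each member. Since that construction distinguishes the parity of $n$, I would keep the same case distinction.

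First suppose $n$ is odd. The proof of Theorem~\ref{thm:existence_population_max_min_difference} produces a set $H = \{H_1, \ldots, H_n\}$ of $n$ spanning trees with $n(e, H) = 2$ for every $e \in E$. I would then let $P$ consist of exactly $k$ copies of each $H_i$. This gives $|P| = kn = \mu$, and for every edge $e$ we get $n(e, P) = k \cdot n(e, H) = 2k$, which is exactly the claim.

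Now suppose $n$ is even. Here the proof uses a decomposition of $G = K_n$ into $h = n/2$ edge-disjoint Hamiltonian paths; writing $H$ for this set, every edge of $E$ lies in exactly one member of $H$, i.e. $n(e, H) = 1$. I would now put $2k$ copies of each of the $n/2$ paths into $P$, so that $|P| = 2k \cdot (n/2) = kn = \mu$ and $n(e, P) = 2k \cdot 1 = 2k$ for every $e \in E$.

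The only thing that needs care here is the bookkeeping of the even/odd split: in both cases the underlying family exactly covers the edge set of the complete graph, with multiplicity $1$ in the even case and multiplicity $2$ in the odd case, so scaling these multiplicities by $2k$ and by $k$ respectively makes every edge appear precisely $2k$ times while keeping the population size at $kn$. There is no deeper obstacle. (For the few small values of $n$ not covered by Theorem~\ref{thm:existence_population_max_min_difference} the statement is immediate by inspection.)
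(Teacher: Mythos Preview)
Your proposal is correct and is exactly the intended derivation: the paper merely states that the corollary follows from the construction in Theorem~\ref{thm:existence_population_max_min_difference}, and you have spelled out the bookkeeping in both parity cases (taking $k$ copies of the $n$-tree family $H$ when $n$ is odd, and $2k$ copies of the $n/2$ edge-disjoint Hamiltonian paths when $n$ is even). Nothing is missing.
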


Moreover, for the case of $1 \leq \mu \leq \lfloor\frac{n}{2}\rfloor$ it follows that a set of $\mu$ edge-disjoint $STs$ is possible. From now on this will be the case of interest unless told otherwise.

\begin{corollary}
\label{cor:existence_population_edge_disjoint_trees}
For every complete graph $G=(V,E)$ with $|V|=n$ and every $1 \leq \mu \leq \lfloor \frac{n}{2} \rfloor$ there is a population $P$ of $\mu$ edge-disjoint spanning trees with maximum diversity $D_o(P)=\mu(\mu-1)(n-1)$.
\end{corollary}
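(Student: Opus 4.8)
The plan is to read off the statement directly from the construction in the proof of Theorem~\ref{thm:existence_population_max_min_difference}, which already exhibits $\lfloor n/2 \rfloor$ pairwise edge-disjoint spanning trees inside the complete graph $G$. Concretely, if $n$ is even one uses the decomposition of $K_n$ into $h = n/2$ edge-disjoint Hamiltonian paths~\cite{Chartrand10986GraphsAndDigraphs}, each of which is a spanning tree; if $n$ is odd one uses the Alspach et al.~\cite{Alspach1990} decomposition of $K_n$ into $(n-1)/2$ edge-disjoint Hamiltonian cycles and deletes one edge from each cycle, obtaining $(n-1)/2 = \lfloor n/2 \rfloor$ edge-disjoint Hamiltonian paths. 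In both cases we obtain a family $H$ of $\lfloor n/2 \rfloor$ spanning trees of $G$ that are pairwise edge-disjoint.

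Given $1 \leq \mu \leq \lfloor n/2 \rfloor$, I would simply let $P \subseteq H$ be any subset of size $\mu$. Since the trees in $H$ are pairwise edge-disjoint, so are the trees in $P$, i.e. $o(T_i, T_j) = |E(T_i) \cap E(T_j)| = 0$ for all $i \neq j$. Substituting into the definition~\eqref{eq:population_overlap} gives
\begin{align*}
D_o(P) = \mu(\mu-1)(n-1) - \sum_{i=1}^{\mu}\sum_{\substack{j=1 \\ j \neq i}}^{\mu} o(T_i, T_j) = \mu(\mu-1)(n-1).
\end{align*}
To see that this is indeed the maximum possible value, note that each overlap term satisfies $o(T_i,T_j) \geq 0$, so for every population $P'$ of $\mu$ spanning trees we have $D_o(P') \leq \mu(\mu-1)(n-1)$; hence $P$ attains the maximum.

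There is essentially no hard step here: the corollary is a bookkeeping consequence of Theorem~\ref{thm:existence_population_max_min_difference}. The only point deserving a word of care is the range of $n$: Theorem~\ref{thm:existence_population_max_min_difference} is stated for $n \geq 4$, and the underlying path/cycle decompositions are classical for $n \geq 4$; for the degenerate small cases the only admissible value is $\mu = 1$ (since $\lfloor n/2 \rfloor \leq 1$ for $n \leq 3$), for which the claim is trivial as a single spanning tree is vacuously edge-disjoint and $D_o(P) = 0 = \mu(\mu-1)(n-1)$. One might also remark that completeness of $G$ is used only to guarantee the existence of the decomposition; any graph admitting $\mu$ edge-disjoint spanning trees would do.
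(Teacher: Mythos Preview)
Your proposal is correct and follows exactly the route the paper intends: the corollary is stated without its own proof and is meant to be read off from the construction in Theorem~\ref{thm:existence_population_max_min_difference}, namely the $n/2$ edge-disjoint Hamiltonian paths for even $n$ and the trees $H_1,\ldots,H_{(n-1)/2}$ (each a Hamiltonian cycle minus one edge) for odd $n$. Your added remarks on maximality via $o(T_i,T_j)\geq 0$ and on the trivial cases $n\leq 3$ are fine supplementary observations.
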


\section{Structural Properties And Runtime Results}
\label{sec:sec3}

We consider the easiest non-trivial case: $\mu=2$. We study the performance of $(2+1)$-EA (see Algorithm~\ref{alg:ea}) on complete graphs with $n \geq 4$ nodes where all edges have the same cost. We are interested in the \emph{expected number of function evaluations} until a $D_o$-maximal population is found for the first time; the most common measure in the the time complexity analysis of randomized search heuristics
~\cite{BookDoeNeu}. The fitness function for the EA is the edge overlap which we aim to minimise (cf.~Eq~\ref{eq:population_overlap}); recall that this is equivalent to maximising the overlap diversity.

Clearly, given two STs $T_1, T_2$ the fitness can be improved by replacing an overlap edge, i.e. and edge $e \in E(T_1) \cap E(T_2)$ with a \emph{free} edge which is not used by neither $T_1$ and $T_2$. It turns out that even for $\mu=2$ a single one-edge exchange mutation may not be sufficient to reduce the overlap. Consider the case where $T_1$ and $T_2$ are star graphs and there is just a single overlap edge $\{u,v\}$ which links the center nodes $u$ of $T_1$ and $v$ of $T_2$ (see Fig.~\ref{fig:fitness_plateau}). In this case any alternative edge for $\{u,v\}$ in $T_1$ and $T_2$ is also in $T_2$ and $T_1$. Hence, in this special constellation a fitness plateau is reached which cannot be surpassed by a single 1-EX move. However, two consecutive 1-EX moves are sufficient to surpass this fitness plateau: w.l.o.g. replace $\{u,v\}$ in $T_1$ with one of the other suitable edges. Now $T_1$ is no star graph anymore and an improving step for $T_2$ exists.

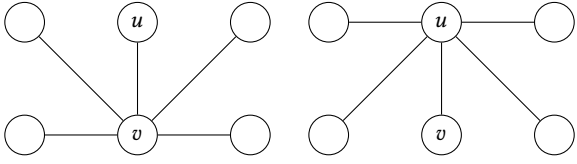
\begin{figure}[t]
\centering
\begin{tikzpicture}[xscale=1.5, yscale=1.5]
\begin{scope}[every node/.style={circle, draw, fill=white, minimum size=15pt}]
\node (v0) at (0,0) {};
\node (v1) at (1,0) {$v$};
\node (v2) at (2,0) {};
\node (v3) at (0,1) {};
\node (v4) at (1,1) {$u$};
\node (v5) at (2,1) {};

\foreach \i in {0, 2, 3, 4, 5}
{
    \draw (v1) edge[solid] (v\i);
}
\end{scope}
\end{tikzpicture}
\hskip12pt
\begin{tikzpicture}[xscale=1.5, yscale=1.5]
\begin{scope}[every node/.style={circle, draw, fill=white, minimum size=15pt}]
\node (v0) at (0,0) {};
\node (v1) at (1,0) {$v$};
\node (v2) at (2,0) {};
\node (v3) at (0,1) {};
\node (v4) at (1,1) {$u$};
\node (v5) at (2,1) {};

\foreach \i in {0, 1, 2, 3, 5}
{
    \draw (v4) edge (v\i);
}
\end{scope}
\end{tikzpicture}
\caption{Two spanning trees of star graph type with center nodes $v$ for $T_1$ (left) and $u$ for $T_2$ (right).}
\label{fig:fitness_plateau}
\end{figure}

In the following we show that in case of overlap $o(T_1, T_2) \geq 2$ there always exists a 1-EX step that strictly improves the diversity.

\begin{lemma}
\label{lem:free_edges_lemma}
On every complete graph with $n$ nodes and population $P = \{T_1, T_2\}$, if $o(T_1, T_2) \geq 2$ then each edge $e \in E(T_1) \cap E(T_2)$ can be replaced with $k-1$ other edges in either tree, such that the overlap reduces by one.
\end{lemma}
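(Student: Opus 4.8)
The plan is to reduce the statement to a one‑edge cut argument. Fix an overlap edge $e=\{u,v\}\in E(T_1)\cap E(T_2)$. Deleting $e$ from $T_1$ splits it into two subtrees and hence induces a partition of $V$ into nonempty sets $A\ni u$ and $B\ni v$. Since $G$ is complete, the edges whose insertion restores a spanning tree of $T_1-e$ are exactly the $|A|\cdot|B|$ edges of the cut $(A,B)$, and $e$ is the unique edge of $T_1$ among them. If a cut edge $f$ additionally satisfies $f\notin E(T_2)$, then $f$ is a free edge (used by neither tree), $T_1-e+f$ is again a spanning tree, and $E(T_1-e+f)\cap E(T_2)=(E(T_1)\cap E(T_2))\setminus\{e\}$, so the population overlap drops by exactly one. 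Thus the number of overlap‑decreasing replacements of $e$ inside $T_1$ equals $|A|\cdot|B|$ minus the number of edges of $T_2$ crossing $(A,B)$ (a set that contains $e$), and the same holds with the roles of $T_1,T_2$ exchanged; it therefore suffices to show that for at least one of the two trees this count is positive, and the explicit count then delivers the stated number of alternatives.

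I would then split on the shape of the cut. If both sides are large, $\min\{|A|,|B|\}\geq 2$, then $|A|\cdot|B|\geq 2(n-2)\geq n$ while $T_2$ has only $n-1$ edges in all, so at least $2(n-2)-(n-1)=n-3\geq 1$ cut edges avoid $T_2$; here the conclusion holds even without using $o(T_1,T_2)\geq 2$. The only remaining situation is that $e$ is a pendant edge of $T_1$, say $A=\{u\}$; then the $n-1$ cut edges are precisely the edges at $u$, the ones we must avoid are the $\deg_{T_2}(u)$ edges of $T_2$ incident to $u$ (among them $e$), and we obtain $(n-1)-\deg_{T_2}(u)$ valid replacements, which is positive unless $u$ is a universal vertex of $T_2$, \ie $T_2$ is a star centered at $u$. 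In that case I switch to operating on $T_2$: then $v$ is a leaf of $T_2$, so deleting $e$ from $T_2$ isolates $v$, the cut edges are the edges at $v$, and the symmetric bound gives $(n-1)-\deg_{T_1}(v)$ valid replacements.

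The heart of the argument — and the step I expect to be the real obstacle — is ruling out the last degenerate configuration, namely that $u$ is a star center of $T_2$ and simultaneously $v$ is a star center of $T_1$ (the two‑star constellation of Figure~\ref{fig:fitness_plateau}). But in that case $E(T_1)\cap E(T_2)=\{\{u,v\}\}$, so $o(T_1,T_2)=1$, contradicting the hypothesis $o(T_1,T_2)\geq 2$. Hence under $o(T_1,T_2)\geq 2$ at least one of the two trees admits an overlap‑decreasing one‑edge exchange that removes $e$, and the counts above quantify how many such replacements exist; as $e$ was an arbitrary overlap edge, this establishes the lemma. The remaining care is bookkeeping: making sure that ``free'' really means outside $E(T_1)\cup E(T_2)$, that the cut‑edge set is computed in the complete graph, and that the switch from $T_1$ to $T_2$ is legitimate because the lemma permits the exchange ``in either tree''.
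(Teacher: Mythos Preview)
Your case analysis correctly shows that \emph{some} overlap-decreasing one-edge exchange exists, but it does not establish the quantitative claim of the lemma, namely that at least $k-1$ valid replacements exist for each overlap edge (where $k=o(T_1,T_2)$). In Case~1 you bound the number of $T_2$-edges crossing $(A,B)$ crudely by $n-1$ and obtain $\geq n-3$ free alternatives; when $k=n-1$ (e.g.\ $T_1=T_2$ not a star and $e$ a non-pendant edge) this falls strictly below the required $k-1=n-2$. In Case~2 you obtain $(n-1)-\deg_{T_2}(u)$, which in fact \emph{is} $\geq k-1$ (the $k-1$ other overlap edges are not incident to the $T_1$-leaf $u$, hence lie among the $(n-1)-\deg_{T_2}(u)$ edges of $T_2$ not at $u$), but you never argue this. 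The assertion that ``the explicit count then delivers the stated number of alternatives'' is therefore unsupported.

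The paper's proof sidesteps both the case split and the tree-switching with one observation you are missing: the $k-1$ overlap edges other than $e$ all lie in $T_1-e$ and hence none of them crosses the cut $(A,B)$. Consequently at most $(n-1)-(k-1)=n-k$ edges of $T_2$ cross the cut, and since $|A|\cdot|B|\geq n-1$ there are at least $(n-1)-(n-k)=k-1$ free cut edges, uniformly for every overlap edge and in either tree. This $k-1$ count is not cosmetic: it is what produces the $(k-1)/m$ success probability in Theorem~\ref{thm:EA_mu_equal_two_runtime} and turns the fitness-level sum into a harmonic series; with only your ``$\geq 1$'' guarantee the same argument would yield $O(mn^2)$ rather than $O(mn\log n)$.
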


\begin{proof}
Let $k = o(T_1, T_2) \in \{2, \ldots, n-1\}$. Choose an arbitrary overlap edge $e \in (E(T_1) \cap E(T_2))$. W.~l.~o.~g. set $T_1' = T_1 \setminus \{e\}$. Since $T_1$ is a spanning tree, removing $e$ destroys connectivity and decomposes $T_1'$ into a cut, i.e. two disjoint node sets $C_1$ and $C_2$. It should be noted that 1-EX works differently: first insert an edge and remove an edge from the unique introduced cycle. However, for the sake of intuition we might think the other way around. In order to re-establish the spanning tree property in $T'$ a cut-edge (an edge that has one end node in $C_1$ and the other in $C_2$) needs to be added. There are $|C_1|\cdot|C_2|$ such edges since $G$ is complete. Now $|C_1|\cdot|C_2| \geq (n-1)$. This happens if one node set has a single isolated node.
Since $e$ is a cut-edge there are $(n-2)$ alternative cut-edges left. The overlap is $k$. In consequence at most $(n-1)-k$ of those $(n-2)$ alternatives are used by $T_2$. Hence, the number of non-used cut edges is at least $(n-2) - ((n - 1) - k) = k - 1 \geq 1$ since $k \geq 2$. Hence, we can always include one of those edges into $T_1$ and remove $e$ which leads to a reduction of the overlap by one.
\end{proof}

\begin{theorem}
\label{thm:EA_mu_equal_two_runtime}
On every complete graph with $n$ nodes $(2+1)$-EA with one-edge-exchange mutation, starting with two clones of an arbitrary spanning tree, needs expected time $O(mn\log n)$ to find a $D_o$-maximal population.
\end{theorem}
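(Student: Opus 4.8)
The plan is a fitness‑level (phase) argument on the \emph{population overlap} $k:=o(T_1,T_2)$ of the current pair $P=\{T_1,T_2\}$. With two clones we start at $k=n-1$, and for $\mu=2$ a $D_o$‑maximal population is exactly one with $k=0$ (it exists by Corollary~\ref{cor:existence_population_edge_disjoint_trees}), so we must drive $k$ down to $0$. The first thing to record is that the removal step of Algorithm~\ref{alg:ea} retains, among the three pairs obtainable from $\{T_1,T_2,T'\}$, a pair of maximum $D_o$‑value, i.e.\ of minimum overlap (ties broken uniformly at random, as is standard). Hence $k$ is non‑increasing over time, and it strictly decreases precisely when a strictly less overlapping pair is produced. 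It therefore suffices to bound, for each value of $k$, the expected number of generations spent with population overlap equal to $k$, and then sum.

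For $k\ge 2$ I would invoke Lemma~\ref{lem:free_edges_lemma}. Fix an overlap edge $e\in E(T_1)\cap E(T_2)$. Deleting $e$ from $T_1$ splits it into a cut $(C_1,C_2)$ whose unique $T_1$‑edge is $e$, and the lemma supplies at least $k-1$ cut‑edges not used by $T_2$; each of them is a non‑tree edge of $T_1$. The small but essential geometric observation is that inserting any such edge $f$ into $T_1$ creates a cycle that necessarily contains $e$: the $T_1$‑path joining the endpoints of $f$ crosses the cut, and $e$ is the only crossing edge. Thus, choosing $T_1$ (probability $1/2$), inserting a ``good'' $f$ (probability at least $(k-1)/m$), and deleting $e$ from the induced cycle (probability at least $1/n$, since the cycle has at most $n$ edges) yields an offspring $T'=T_1+f-e$ with $o(T',T_2)=k-1$, forcing the population overlap to at most $k-1$ after selection. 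The improvement probability is at least $\tfrac{k-1}{2mn}$, so the expected time with overlap $k$ is at most $\tfrac{2mn}{k-1}$, and $\sum_{k=2}^{n-1}\tfrac{2mn}{k-1}=2mn\,H_{n-2}=O(mn\log n)$.

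The remaining phase, $k=1$, is where the work lies: Lemma~\ref{lem:free_edges_lemma} gives nothing and Figure~\ref{fig:fitness_plateau} exhibits a genuine fitness plateau. I would first characterise it: the only overlap‑$1$ populations admitting no improving $1$‑EX are the ``double stars'' in which $T_1,T_2$ are stars with distinct centers $c_1,c_2$ meeting only in $e=\{c_1,c_2\}$ (because ``no good swap in $T_i$'' forces the cut obtained by removing $e$ from $T_i$ to have a singleton side and forces the \emph{other} tree to be exactly the star on it). Off the plateau, an improving $1$‑EX reduces the overlap straight to $0$ and, by the same reasoning as in the $k\ge2$ case, is performed with probability at least $\tfrac{1}{2mn}$. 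On the plateau, a $1$‑EX on $T_1$ inserting an edge $\{a,b\}$ with $a,b\notin\{c_1,c_2\}$ and deleting one of $\{c_1,a\},\{c_1,b\}$ from the created triangle produces a non‑star $T'$ with $o(T',T_2)=1$ and $o(T',T_1)=n-2\ge2$ (probability $\Omega(1)$); since only the two overlap‑$1$ pairs $\{T_1,T_2\}$ and $\{T',T_2\}$ tie for maximum $D_o$, the uniform tie‑break keeps the fresh non‑plateau pair with probability $\tfrac12$. As the process can bounce between plateau and non‑plateau overlap‑$1$ states, I would finish with a two‑state Markov‑chain estimate — leaving the plateau with probability $\Omega(1)$ per step, finishing from a non‑plateau state with probability at least $\tfrac{1}{2mn}$ per step — which bounds the entire $k=1$ phase by $O(mn)$, absorbed into the $O(mn\log n)$ from the earlier phases. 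Summing over all phases gives the claim.

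The main obstacle is exactly this plateau: one must recognise that it is the \emph{only} obstruction (so that Lemma~\ref{lem:free_edges_lemma} handles every other configuration uniformly) and, more delicately, that the diversity‑based selection does not trap the algorithm there — the escaping move is $D_o$‑neutral and only ``sticks'' thanks to random tie‑breaking, after which an always‑available (if low‑probability) improving move completes the descent to overlap $0$.
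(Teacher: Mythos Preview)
Your proposal is correct and follows essentially the same approach as the paper: a fitness-level argument on the overlap $k$, using Lemma~\ref{lem:free_edges_lemma} for $k\ge 2$ to get improvement probability $\Omega((k-1)/(mn))$ and hence $O(mn\log n)$ to reach $k=1$, then a separate treatment of the star--star plateau at $k=1$ contributing a further $O(mn)$.

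Your handling of the $k=1$ phase is in fact more thorough than the paper's. The paper simply asserts that the plateau ``can be resolved by two consecutive mutations'' and invokes a ``fair random walk on the states of the fitness plateau'' with each step happening with probability at least $1/(mn)$, giving $O(mn)$. You instead (i) explicitly characterise the plateau as the double-star configuration, (ii) exhibit a concrete $D_o$-neutral escape move succeeding with probability $\Omega(1)$ (relying on uniform random tie-breaking, which the paper also tacitly assumes), and (iii) run a two-state Markov-chain bound to control the bouncing between plateau and non-plateau overlap-$1$ states. This buys you a cleaner and more defensible $O(mn)$ for the last phase; the paper's version reaches the same bound but with less justification. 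One small remark: the factor $1/2$ for choosing the ``right'' tree is unnecessary, since Lemma~\ref{lem:free_edges_lemma} applies symmetrically to both trees, but of course it does not affect the asymptotics.
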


\begin{proof}
Let $T_1$ and $T_2$ be the two spanning trees. Let $k = o(T_1, T_2) \in \{0, \ldots, n-1\}$ with $k = n-1$ after initialization. We define natural disjoint fitness-levels~\cite{BookDoeNeu}
\begin{align*}
    A_k = \{P = \{T_1, T_2\} \,|\, o(T_1, T_2) = k\},
\end{align*}
i.~e., the $k$-th fitness level contains all populations with overlap $k$ and $A_0$ contains all populations of maximum $D_o$-diversity. In order to leave $A_k$ and transition into $A_{k-1}$ or lower (note that we are minimizing overlap here), we need to reduce the overlap by at least one. For $k \geq 2$ by Lemma~\ref{lem:free_edges_lemma} we know that we can include at least $(k-1)$ different edges into either $T_1$ or $T_2$ to produce a cycle with a cut edge on it. The probability for this event is at least $(k-1)/m$. The resulting cycle has length at most $n$. Thus, the probability to select the overlap edge is at least $1/n$.
By folklore waiting time arguments the expected waiting time for such an improving step is at most $(mn)/(k-1)$. Summing up we obtain the following upper bound until we reach $A_1$:
\begin{align*}
    \sum_{k=2}^{n-1} \frac{mn}{k-1} = mn \sum_{k=1}^{n-2} \frac{1}{k} = mn \cdot H_{n-2} = O(mn\log n).
\end{align*}
Here, $H_{n}=\sum_{i=1}^{n} (1/i) = \log(n) + \Theta(1)$ is the Harmonic sum.
The last step is to reach $A_0$ from $A_1$. This may be possible in a single step, but there might be a problem if $T_1$ and $T_2$ are star graphs where the overlap edge links the center nodes. This situation can be resolved by two consecutive mutations with an escape-state when one of the trees is no star graph anymore. In this situation a single one-edge exchange in the other tree can get rid of the last overlap. There is a fair random walk on the states of the fitness plateau which ends in the escape state in expected time $O(mn)$ as each step happens with probability at least $1/(mn)$. Eventually, eliminating the last overlap requires another $O(mn)$ steps in expectation since the overlap edge and an adequate replacement edge have to be selected; again, both events happen with probability at least $1/(mn)$. In sum the expected time to reach $A_0$ remains $O(mn \log n)$.
\end{proof}

\begin{table*}[htbp]
\centering
\caption{\label{tab:uniform_weights}Comparison in terms of diversity (\textbf{$D_o$}), the mean number of iterations (\textbf{mean}) until the algorithm terminated, the standard deviation of iterations (\textbf{std}) and results of Wilcoxon-Mann-Whitney tests (\textbf{stat}). The data is split by the sampling strategy (Uniform or Poisson). Here, $X^{+}$ means that the number of iterations is significantly better, i.e. lower, for the respective strategy. Lowest mean values per row are \colorbox{gray!20}{\textbf{highlighted}}.}
\renewcommand{\tabcolsep}{3pt}
\renewcommand{\arraystretch}{1.6}
\centering
\begin{footnotesize}
\begin{tabular}[t]{rrrrrrrrrrrrrrrrrr}
\toprule
\multicolumn{1}{c}{\textbf{ }} & \multicolumn{1}{c}{\textbf{ }} & \multicolumn{4}{c}{\textcolor{uniform1}{\textbf{Uniform[1] (1)}}} & \multicolumn{4}{c}{\textcolor{uniform2}{\textbf{Uniform[2] (2)}}} & \multicolumn{4}{c}{\textcolor{uniform3}{\textbf{Uniform[3] (3)}}} & \multicolumn{4}{c}{\textcolor{poisson}{\textbf{Poisson (4)}}} \\
\cmidrule(l{3pt}r{3pt}){3-6} \cmidrule(l{3pt}r{3pt}){7-10} \cmidrule(l{3pt}r{3pt}){11-14} \cmidrule(l{3pt}r{3pt}){15-18}
$n$ & $\mu$ & $D_o$ & \textbf{mean} & \textbf{std} & \textbf{stat} & $D_o$ & \textbf{mean} & \textbf{std} & \textbf{stat} & $D_o$ & \textbf{mean} & \textbf{std} & \textbf{stat} & $D_o$ & \textbf{mean} & \textbf{std} & \textbf{stat}\\
\midrule
 & 2 & 100.00 & 488.70 & 164.8 &  & 100.00 & 333.97 & 144.0 & \textcolor{uniform1}{$\text{1}^{+}$} & 100.00 & 243.93 & 104.5 & \textcolor{uniform1}{$\text{1}^{+}$}, \textcolor{uniform2}{$\text{2}^{+}$} & 100.00 & \cellcolor{gray!20}{\textbf{198.67}} & 85.2 & \textcolor{uniform1}{$\text{1}^{+}$}, \textcolor{uniform2}{$\text{2}^{+}$}, \textcolor{uniform3}{$\text{3}^{+}$}\\

 & 10 & 100.00 & 7153.43 & 2389.4 &  & 100.00 & 5289.17 & 1469.0 & \textcolor{uniform1}{$\text{1}^{+}$} & 100.00 & 4660.60 & 1514.0 & \textcolor{uniform1}{$\text{1}^{+}$}, \textcolor{uniform2}{$\text{2}^{+}$} & 100.00 & \cellcolor{gray!20}{\textbf{3741.43}} & 1185.4 & \textcolor{uniform1}{$\text{1}^{+}$}, \textcolor{uniform2}{$\text{2}^{+}$}, \textcolor{uniform3}{$\text{3}^{+}$}\\

\multirow{-3}{*}{\raggedleft\arraybackslash 50} & 25 & 99.80 & 62500.00 & 0.0 &  & 99.79 & 62500.00 & 0.0 &  & 99.77 & 62500.00 & 0.0 &  & 99.74 & 62500.00 & 0.0 & \\
\cmidrule{1-18}
 & 2 & 100.00 & 1624.93 & 510.2 &  & 100.00 & 1072.43 & 429.8 & \textcolor{uniform1}{$\text{1}^{+}$} & 100.00 & 642.27 & 283.8 & \textcolor{uniform1}{$\text{1}^{+}$}, \textcolor{uniform2}{$\text{2}^{+}$} & 100.00 & \cellcolor{gray!20}{\textbf{578.70}} & 255.8 & \textcolor{uniform1}{$\text{1}^{+}$}, \textcolor{uniform2}{$\text{2}^{+}$}\\

 & 10 & 100.00 & 16836.80 & 3096.7 &  & 100.00 & 11373.37 & 3580.1 & \textcolor{uniform1}{$\text{1}^{+}$} & 100.00 & 8120.70 & 2252.3 & \textcolor{uniform1}{$\text{1}^{+}$}, \textcolor{uniform2}{$\text{2}^{+}$} & 100.00 & \cellcolor{gray!20}{\textbf{7603.67}} & 2459.1 & \textcolor{uniform1}{$\text{1}^{+}$}, \textcolor{uniform2}{$\text{2}^{+}$}\\

 & 25 & 100.00 & 73498.87 & 18139.3 &  & 100.00 & 59131.93 & 15166.8 & \textcolor{uniform1}{$\text{1}^{+}$} & 100.00 & 58792.83 & 19515.9 & \textcolor{uniform1}{$\text{1}^{+}$} & 100.00 & \cellcolor{gray!20}{\textbf{52915.17}} & 18007.0 & \textcolor{uniform1}{$\text{1}^{+}$}, \textcolor{uniform2}{$\text{2}^{+}$}\\

\multirow{-4}{*}{\raggedleft\arraybackslash 100} & 50 & 99.93 & 500000.00 & 0.0 &  & 99.92 & 500000.00 & 0.0 &  & 99.91 & 500000.00 & 0.0 &  & 99.89 & 500000.00 & 0.0 & \\
\cmidrule{1-18}
 & 2 & 100.00 & 4366.17 & 1200.9 &  & 100.00 & 2757.70 & 999.9 & \textcolor{uniform1}{$\text{1}^{+}$} & 100.00 & 1981.87 & 809.1 & \textcolor{uniform1}{$\text{1}^{+}$}, \textcolor{uniform2}{$\text{2}^{+}$} & 100.00 & \cellcolor{gray!20}{\textbf{1523.53}} & 645.6 & \textcolor{uniform1}{$\text{1}^{+}$}, \textcolor{uniform2}{$\text{2}^{+}$}, \textcolor{uniform3}{$\text{3}^{+}$}\\

 & 10 & 100.00 & 45564.87 & 11035.6 &  & 100.00 & 30632.00 & 5848.0 & \textcolor{uniform1}{$\text{1}^{+}$} & 100.00 & 19364.20 & 3210.8 & \textcolor{uniform1}{$\text{1}^{+}$}, \textcolor{uniform2}{$\text{2}^{+}$} & 100.00 & \cellcolor{gray!20}{\textbf{16428.40}} & 3630.1 & \textcolor{uniform1}{$\text{1}^{+}$}, \textcolor{uniform2}{$\text{2}^{+}$}, \textcolor{uniform3}{$\text{3}^{+}$}\\

 & 25 & 100.00 & 153040.03 & 30063.4 &  & 100.00 & 114716.33 & 24364.6 & \textcolor{uniform1}{$\text{1}^{+}$} & 100.00 & 86105.70 & 25137.5 & \textcolor{uniform1}{$\text{1}^{+}$}, \textcolor{uniform2}{$\text{2}^{+}$} & 100.00 & \cellcolor{gray!20}{\textbf{74420.97}} & 19746.9 & \textcolor{uniform1}{$\text{1}^{+}$}, \textcolor{uniform2}{$\text{2}^{+}$}, \textcolor{uniform3}{$\text{3}^{+}$}\\

 & 50 & 100.00 & 448873.77 & 56975.2 &  & 100.00 & 422085.90 & 97636.7 & \textcolor{uniform1}{$\text{1}^{+}$} & 100.00 & 396190.57 & 60480.1 & \textcolor{uniform1}{$\text{1}^{+}$} & 100.00 & \cellcolor{gray!20}{\textbf{360588.67}} & 92265.0 & \textcolor{uniform1}{$\text{1}^{+}$}, \textcolor{uniform2}{$\text{2}^{+}$}, \textcolor{uniform3}{$\text{3}^{+}$}\\

\multirow{-5}{*}{\raggedleft\arraybackslash 200} & 100 & 99.98 & 4000000.00 & 0.0 &  & 99.97 & 4000000.00 & 0.0 &  & 99.96 & 4000000.00 & 0.0 &  & 99.95 & 4000000.00 & 0.0 & \\
\cmidrule{1-18}
 & 2 & 100.00 & 15081.70 & 4080.7 &  & 100.00 & 9765.10 & 3337.4 & \textcolor{uniform1}{$\text{1}^{+}$} & 100.00 & 5980.50 & 1873.9 & \textcolor{uniform1}{$\text{1}^{+}$}, \textcolor{uniform2}{$\text{2}^{+}$} & 100.00 & \cellcolor{gray!20}{\textbf{5139.60}} & 1834.1 & \textcolor{uniform1}{$\text{1}^{+}$}, \textcolor{uniform2}{$\text{2}^{+}$}, \textcolor{uniform3}{$\text{3}^{+}$}\\

 & 10 & 100.00 & 126942.27 & 29267.6 &  & 100.00 & 80306.63 & 17046.5 & \textcolor{uniform1}{$\text{1}^{+}$} & 100.00 & 56559.83 & 10089.8 & \textcolor{uniform1}{$\text{1}^{+}$}, \textcolor{uniform2}{$\text{2}^{+}$} & 100.00 & \cellcolor{gray!20}{\textbf{42918.23}} & 8707.2 & \textcolor{uniform1}{$\text{1}^{+}$}, \textcolor{uniform2}{$\text{2}^{+}$}, \textcolor{uniform3}{$\text{3}^{+}$}\\

 & 25 & 100.00 & 361950.37 & 43469.4 &  & 100.00 & 261919.33 & 46744.9 & \textcolor{uniform1}{$\text{1}^{+}$} & 100.00 & 189700.70 & 31737.1 & \textcolor{uniform1}{$\text{1}^{+}$}, \textcolor{uniform2}{$\text{2}^{+}$} & 100.00 & \cellcolor{gray!20}{\textbf{152788.93}} & 27460.6 & \textcolor{uniform1}{$\text{1}^{+}$}, \textcolor{uniform2}{$\text{2}^{+}$}, \textcolor{uniform3}{$\text{3}^{+}$}\\

 & 50 & 100.00 & 1017406.93 & 216403.9 &  & 100.00 & 731349.90 & 96224.3 & \textcolor{uniform1}{$\text{1}^{+}$} & 100.00 & 588732.20 & 83189.7 & \textcolor{uniform1}{$\text{1}^{+}$}, \textcolor{uniform2}{$\text{2}^{+}$} & 100.00 & \cellcolor{gray!20}{\textbf{543359.37}} & 112333.1 & \textcolor{uniform1}{$\text{1}^{+}$}, \textcolor{uniform2}{$\text{2}^{+}$}, \textcolor{uniform3}{$\text{3}^{+}$}\\

\multirow{-5}{*}{\raggedleft\arraybackslash 400} & 100 & 100.00 & 3344395.80 & 377191.7 &  & 100.00 & 3019514.87 & 550293.3 & \textcolor{uniform1}{$\text{1}^{+}$} & 100.00 & 2827195.20 & 454872.3 & \textcolor{uniform1}{$\text{1}^{+}$} & 100.00 & \cellcolor{gray!20}{\textbf{2541803.50}} & 495036.2 & \textcolor{uniform1}{$\text{1}^{+}$}, \textcolor{uniform2}{$\text{2}^{+}$}, \textcolor{uniform3}{$\text{3}^{+}$}\\
\bottomrule
\end{tabular}
\end{footnotesize}
\end{table*}

Theorem~\ref{thm:EA_mu_equal_two_runtime} states that \EA is capable of finding a population of maximum diversity in polynomial time for $\mu=2$; a satisfying result. However, the probabilities derived in Lemma~\ref{lem:free_edges_lemma} are rather pessimistic. The number of cut-edges which link the two connected components $C_1$ and $C_2$ when an edge is dropped from one tree is at least $(n-1)$. However, if $|C_1|=|C_2|=(n/2)$ ($n$ even), there are $n^2/4=\Theta(n^2)$ many cut edges. In consequence the probability to reduce the overlap is much higher. This could indicate that on average the running time is lower. We now turn the focus to experimental investigations with a broader range of $\mu$-values in the following sections.

\section{Experimental Study}
\label{sec:sec4}

We presented theoretical insights into the expected runtime of $(\mu+1)$-EA for the case $\mu=2$. In this section we perform a series of experiments for general $2 \leq \mu \leq \lfloor\frac{n}{2}\rfloor$. We start with the unconstrained case, i.e., spanning trees are not required to meet a minimum quality. After that we consider the constrained case with different quality thresholds.

\subsection{Reproducibility}

All experiments have been conducted on a HPC cluster with Intel Xeon Gold 6140 18C \@2.30GHz (Skylake) CPUs and 16GB RAM. Algorithm~\ref{alg:ea} was implemented in the statistical programming language R~\cite{Rlang} in version 3.6.0 for rapid prototyping interfacing C++ for performance. Parallel job distribution/management was accomplished with the R package \texttt{batchtools}~\cite{Rbatchtools}. We also relied on the random number generator adopted by \texttt{batchtools}.
Code and data are available in a public GitHub repository.\footnote{Code and data: \url{https://github.com/jakobbossek/GECCO2021-mst-diversity}.}


\subsection{Unconstrained Diversity Optimization}

In the unconstrained setting we consider complete graphs $G=(V,E)$ with $n \in \{50, 100, 200, 400\}$, $m = n(n-1)/2$ and $\mu$ in the range between $2$ and $\lfloor\frac{n}{2}\rfloor$. As our focus in on the algorithms ability to cope with the diversity part the considered setting adopts $c(e)=1 \,\forall e \in E$. This way the check in line~4 of Algorithm~\ref{alg:ea} is always true.
Note that for this setup the statement from Theorem~\ref{cor:existence_population_edge_disjoint_trees} holds.

We are interested in the number of function evaluations required by \EA to reach a population of maximum diversity. Therefore, we adopt two stopping criteria: (1) stop when a maximum diversity population is reached or (2) stop if the budget of $\mu n^2$ function evaluations (FE) is depleted. In addition, we study the effect of a single 1-EX, i.e. performing a single one-edge-exchange per mutation as stated in Algorithm~\ref{alg:ea} in comparison to possibly multiple subsequent 1-EX moves. Here, we consider the case where the number of swaps is sampled from $\{1,\ldots,l\}$ for $l=1,2,3$ uniformly at random (this is denoted by Uniform[$l$] in the following and includes the case of a single 1-EX move).
In addition we consider the number of 1-EX moves being sampled from a Poisson distribution with rate $\lambda=1$. Here, there is a strictly positive probability to exchange multiple edges which may be advantageous and necessary to leave local optima. For each $(n, \mu, \text{sampling})$-combination we run \EA 30 times for statistical soundness.

\begin{figure}[t]
    \centering
    \includegraphics[width=\columnwidth]{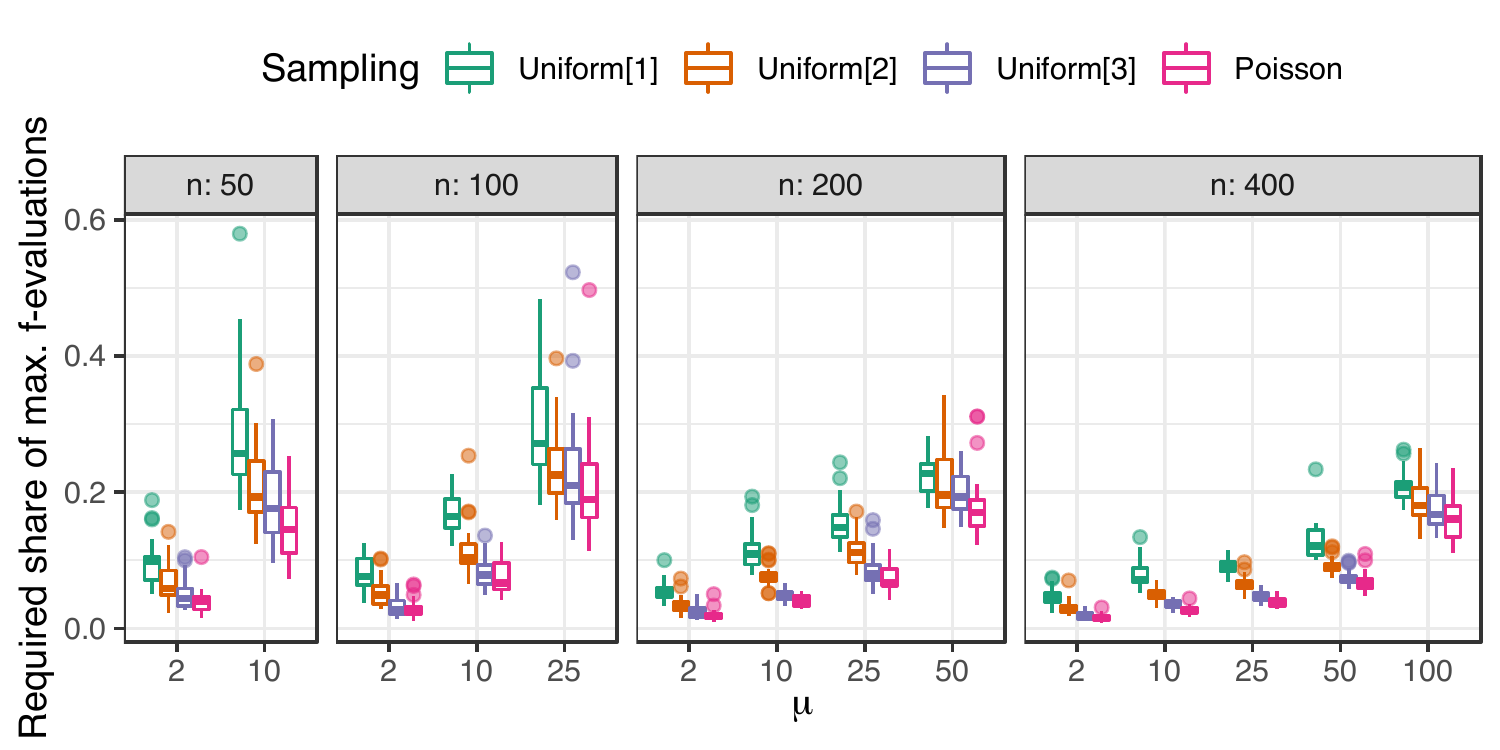}
    \caption{Distribution of the share of maximum function evaluations ($\mu n^2$) in the unconstrained case split by instance size $n$ and population size $\mu$.}
    \label{fig:boxplots_fevals}
\end{figure}
Table~\ref{tab:uniform_weights} shows a comparison in terms of population diversity $D_o$ in percent as well as summary statistics for the number of iterations required. The results indicate that \EA manages to find a $\mu$-size population of maximum diversity for almost all considered scenarios within the limit of $\mu n^2$ function evaluations. Solely the case where $\mu = \lfloor\frac{n}{2}\rfloor$ seems tricky. Here, \EA apparently gets trapped in local optima and is unable to escape (within the given FE-limit): all 30 runs consistently hit the imposed FE-limit.
In addition we observe a near-constant advantage for multiple 1-EX moves per mutation. For relative low $\mu$ \EA with Poisson sampling is up to three times as fast as its competitor and also shows lower standard deviation. The results are statistically significant as indicated by the $1^{+}$ values in the \textbf{stat} column; a plus symbol shows that the zero hypothesis of lower number of FEs for the competitor algorithm was rejected by the Wilcoxon-Mann-Whitney test at a significance level of $5\%$. This effect becomes weaker when $\mu$ approaches $\lfloor\frac{n}{2}\rfloor$. In this region it apparently becomes more beneficial -- in the end of the optimization process -- to produce mutants by single 1-EX moves.
\begin{figure*}[t]
    \centering
    \includegraphics[width=\textwidth]{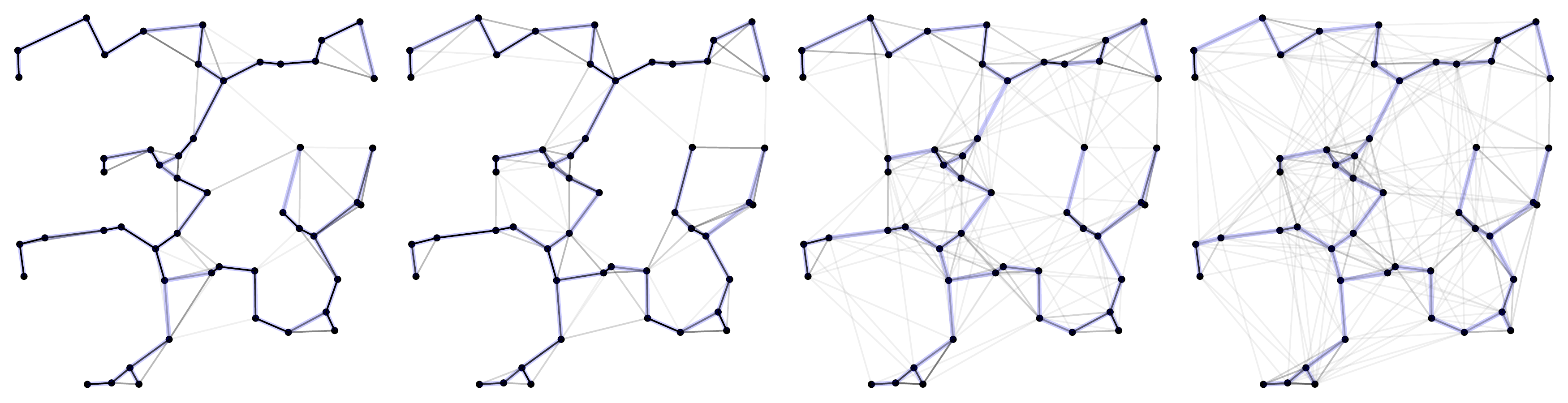}
    \caption{Visualised edge usage for an Euclidean instance with $n=50$ nodes, $\mu=10$ and $\alpha\in \{0.05, 0.10, 0.50, 1.00\}$ (from left to right). In the plots all $\mu$ spanning trees are superimposed. The darker an edge, the higher the overlap in the population. Omitted edges are not part of any solution. MST-edges are shown in blue.}
    \label{fig:superimposed_edges}
\end{figure*}
Figure~\ref{fig:boxplots_fevals} gives a visual impression of the number of function evaluations required as a share of the maximum FE-limit $\mu n^2$. The box-plots confirm our observations from Table~\ref{tab:unconstrained_diversity_measures}.
In conclusion, this series of experiments verifies empirically that \EA is able to efficiently evolve sets of edge-disjoint STs.

The focus of this paper is on the overlap-based diversity measure. However, in real-world applications, e.g. telecommunication networks, diversity with respect to additional structural features is desirable. To reduce the workload of single nodes in said networks, constraints on the maximum degree of the MST pose real-world requirements. In this context also MSTs with many leafs are advantageous. Furthermore, low-diameter\footnote{The diameter of a graph is the length of its longest simple path.} MSTs enable fast communication since the number of hops is kept relatively low~\cite{GM2003NetworkFlowModels}.
\begin{table}[tb]
\caption{\label{tab:unconstrained_diversity_measures}Diversity (in percent of the maximally achievable value) with respect to overlap diversity $D_o$, maximum degree (\textbf{Max. degree}), number of leafs (\textbf{Leaf}) and the diameter (\textbf{Diameter}) in the unconstrained setting.}
\renewcommand{\tabcolsep}{4.7pt}
\renewcommand{\arraystretch}{1.4}
\centering
\begin{footnotesize}
\centering
\begin{tabular}[t]{rrrrrrrrrr}
\toprule
\multicolumn{1}{c}{\textbf{ }} & \multicolumn{1}{c}{\textbf{ }} & \multicolumn{2}{c}{\textbf{$D_o$}} & \multicolumn{2}{c}{\textbf{Max. degree}} & \multicolumn{2}{c}{\textbf{Leaf}} & \multicolumn{2}{c}{\textbf{Diameter}} \\
\cmidrule(l{3pt}r{3pt}){3-4} \cmidrule(l{3pt}r{3pt}){5-6} \cmidrule(l{3pt}r{3pt}){7-8} \cmidrule(l{3pt}r{3pt}){9-10}
$n$ & $\mu$ & \textbf{mean} & \textbf{std} & \textbf{mean} & \textbf{std} & \textbf{mean} & \textbf{std} & \textbf{mean} & \textbf{std}\\
\midrule
 & 2 & 100.00 & 0.00 & 83.33 & 23.97 & 88.33 & 21.51 & 95.00 & 15.26\\

 & 10 & 100.00 & 0.00 & 30.67 & 6.40 & 61.67 & 11.17 & 65.67 & 11.65\\

\multirow{-3}{*}{\raggedleft\arraybackslash 50} & 25 & 99.80 & 0.02 & 14.67 & 2.43 & 33.33 & 4.25 & 41.07 & 5.03\\
\cmidrule{1-10}
 & 2 & 100.00 & 0.00 & 85.00 & 23.30 & 96.67 & 12.69 & 95.00 & 15.26\\

 & 10 & 100.00 & 0.00 & 30.67 & 7.40 & 66.33 & 9.64 & 76.33 & 9.99\\

 & 25 & 100.00 & 0.00 & 16.27 & 3.14 & 43.07 & 5.82 & 50.40 & 4.99\\

\multirow{-4}{*}{\raggedleft\arraybackslash 100} & 50 & 99.93 & 0.00 & 8.87 & 1.14 & 26.07 & 3.34 & 32.07 & 3.46\\
\cmidrule{1-10}
 & 2 & 100.00 & 0.00 & 75.00 & 25.43 & 95.00 & 15.26 & 96.67 & 12.69\\

 & 10 & 100.00 & 0.00 & 35.00 & 5.72 & 78.33 & 8.34 & 81.33 & 7.76\\

 & 25 & 100.00 & 0.00 & 15.20 & 2.44 & 53.73 & 5.91 & 60.13 & 6.52\\

 & 50 & 100.00 & 0.00 & 8.33 & 1.06 & 33.00 & 2.45 & 40.53 & 3.52\\

\multirow{-5}{*}{\raggedleft\arraybackslash 200} & 100 & 99.98 & 0.00 & 4.53 & 0.63 & 20.70 & 1.49 & 24.37 & 1.79\\
\cmidrule{1-10}
 & 2 & 100.00 & 0.00 & 83.33 & 23.97 & 95.00 & 15.26 & 95.00 & 15.26\\

 & 10 & 100.00 & 0.00 & 30.33 & 7.65 & 81.33 & 9.37 & 81.67 & 13.15\\

 & 25 & 100.00 & 0.00 & 16.13 & 2.46 & 61.47 & 6.68 & 66.67 & 6.83\\

 & 50 & 100.00 & 0.00 & 8.87 & 1.14 & 43.13 & 3.43 & 47.47 & 4.81\\

\multirow{-5}{*}{\raggedleft\arraybackslash 400} & 100 & 100.00 & 0.00 & 5.03 & 0.61 & 26.53 & 2.08 & 30.63 & 2.65\\
\bottomrule
\end{tabular}
\end{footnotesize}
\end{table}
\begin{table}[tb]
\centering
\caption{\label{tab:constrained}Comparison in terms of mean overlap diversity (\textbf{mean}), its standard deviation (\textbf{std}) and results of Wilcoxon-Mann-Whitney tests (\textbf{stat}). The data is split by the sampling strategy (Single or Poisson).}
\renewcommand{\tabcolsep}{7.2pt}
\renewcommand{\arraystretch}{1.4}
\begin{footnotesize}
\begin{tabular}[t]{rrrrrrrrr}
\toprule
\multicolumn{1}{c}{\textbf{ }} & \multicolumn{1}{c}{\textbf{ }} & \multicolumn{1}{c}{\textbf{ }} & \multicolumn{3}{c}{\textbf{\textcolor{uniform1}{Uniform[1] (1)}}} & \multicolumn{3}{c}{\textbf{\textcolor{poisson}{Poisson (2)}}} \\
\cmidrule(l{3pt}r{3pt}){4-6} \cmidrule(l{3pt}r{3pt}){7-9}
$n$ & $\mu$ & $\alpha$ & \textbf{mean} & \textbf{std} & \textbf{stat} & \textbf{mean} & \textbf{std} & \textbf{stat}\\
\midrule
 &  & 0.05 & \cellcolor{gray!20}{\textbf{21.97}} & 6.17 & \textcolor{poisson}{$\text{2}^{+}$} & 19.12 & 3.96 & \\

 &  & 0.10 & \cellcolor{gray!20}{\textbf{26.73}} & 4.49 & \textcolor{poisson}{$\text{2}^{+}$} & 24.56 & 4.59 & \\

 &  & 0.50 & 47.14 & 4.92 &  & \cellcolor{gray!20}{\textbf{66.80}} & 5.44 & \textcolor{uniform1}{$\text{1}^{+}$}\\

 & \multirow{-4}{*}{\raggedleft\arraybackslash 2} & 1.00 & 62.86 & 5.98 &  & \cellcolor{gray!20}{\textbf{91.29}} & 4.55 & \textcolor{uniform1}{$\text{1}^{+}$}\\

\cline{2-9}
 &  & 0.05 & \cellcolor{gray!20}{\textbf{28.47}} & 1.41 & \textcolor{poisson}{$\text{2}^{+}$} & 22.35 & 1.29 & \\

 &  & 0.10 & \cellcolor{gray!20}{\textbf{36.69}} & 1.41 & \textcolor{poisson}{$\text{2}^{+}$} & 29.97 & 1.61 & \\

 &  & 0.50 & 63.60 & 1.97 &  & \cellcolor{gray!20}{\textbf{64.57}} & 1.06 & \textcolor{uniform1}{$\text{1}^{+}$}\\

 & \multirow{-4}{*}{\raggedleft\arraybackslash 10} & 1.00 & 75.34 & 2.48 &  & \cellcolor{gray!20}{\textbf{83.56}} & 0.64 & \textcolor{uniform1}{$\text{1}^{+}$}\\

\cline{2-9}
 &  & 0.05 & \cellcolor{gray!20}{\textbf{29.15}} & 1.02 & \textcolor{poisson}{$\text{2}^{+}$} & 26.07 & 1.25 & \\

 &  & 0.10 & \cellcolor{gray!20}{\textbf{38.28}} & 0.71 & \textcolor{poisson}{$\text{2}^{+}$} & 35.13 & 1.04 & \\

 &  & 0.50 & \cellcolor{gray!20}{\textbf{68.23}} & 0.81 & \textcolor{poisson}{$\text{2}^{+}$} & 67.43 & 0.56 & \\

\multirow{-12}{*}{\raggedleft\arraybackslash 50} & \multirow{-4}{*}{\raggedleft\arraybackslash 25} & 1.00 & 81.56 & 0.66 &  & \cellcolor{gray!20}{\textbf{83.28}} & 0.38 & \textcolor{uniform1}{$\text{1}^{+}$}\\
\cline{1-9}
 &  & 0.05 & 11.41 & 2.85 &  & \cellcolor{gray!20}{\textbf{11.68}} & 2.58 & \\

 &  & 0.10 & 15.89 & 4.48 &  & \cellcolor{gray!20}{\textbf{19.46}} & 3.59 & \textcolor{uniform1}{$\text{1}^{+}$}\\

 &  & 0.50 & 31.14 & 4.96 &  & \cellcolor{gray!20}{\textbf{58.86}} & 3.44 & \textcolor{uniform1}{$\text{1}^{+}$}\\

 & \multirow{-4}{*}{\raggedleft\arraybackslash 2} & 1.00 & 47.30 & 4.26 &  & \cellcolor{gray!20}{\textbf{85.79}} & 2.69 & \textcolor{uniform1}{$\text{1}^{+}$}\\

\cline{2-9}
 &  & 0.05 & \cellcolor{gray!20}{\textbf{24.68}} & 1.06 & \textcolor{poisson}{$\text{2}^{+}$} & 15.92 & 1.47 & \\

 &  & 0.10 & \cellcolor{gray!20}{\textbf{33.87}} & 1.33 & \textcolor{poisson}{$\text{2}^{+}$} & 24.37 & 1.50 & \\

 &  & 0.50 & \cellcolor{gray!20}{\textbf{59.10}} & 2.51 &  & 59.07 & 1.05 & \\

 & \multirow{-4}{*}{\raggedleft\arraybackslash 10} & 1.00 & 70.94 & 3.33 &  & \cellcolor{gray!20}{\textbf{80.41}} & 0.72 & \textcolor{uniform1}{$\text{1}^{+}$}\\

\cline{2-9}
 &  & 0.05 & \cellcolor{gray!20}{\textbf{26.24}} & 0.61 & \textcolor{poisson}{$\text{2}^{+}$} & 20.53 & 0.92 & \\

 &  & 0.10 & \cellcolor{gray!20}{\textbf{35.98}} & 0.71 & \textcolor{poisson}{$\text{2}^{+}$} & 29.57 & 0.96 & \\

 &  & 0.50 & \cellcolor{gray!20}{\textbf{66.55}} & 0.68 & \textcolor{poisson}{$\text{2}^{+}$} & 63.75 & 0.57 & \\

 & \multirow{-4}{*}{\raggedleft\arraybackslash 25} & 1.00 & 79.78 & 0.69 &  & \cellcolor{gray!20}{\textbf{81.13}} & 0.45 & \textcolor{uniform1}{$\text{1}^{+}$}\\

\cline{2-9}
 &  & 0.05 & \cellcolor{gray!20}{\textbf{26.69}} & 0.44 & \textcolor{poisson}{$\text{2}^{+}$} & 22.90 & 0.77 & \\

 &  & 0.10 & \cellcolor{gray!20}{\textbf{36.64}} & 0.40 & \textcolor{poisson}{$\text{2}^{+}$} & 32.16 & 0.64 & \\

 &  & 0.50 & \cellcolor{gray!20}{\textbf{67.86}} & 0.35 & \textcolor{poisson}{$\text{2}^{+}$} & 66.30 & 0.43 & \\

\multirow{-16}{*}{\raggedleft\arraybackslash 100} & \multirow{-4}{*}{\raggedleft\arraybackslash 50} & 1.00 & 81.74 & 0.46 &  & \cellcolor{gray!20}{\textbf{82.17}} & 0.20 & \textcolor{uniform1}{$\text{1}^{+}$}\\
\bottomrule
\end{tabular}
\end{footnotesize}
\end{table}
For MSTs on complete graphs the number of leafs, the maximum degree and the diameter can take $n-2$ different values in $\{2, 3, \ldots, n-1\}$ where in all three cases $n$-vertex paths and star graphs pose the extreme cases; i.e. a star graph has $(n-1)$ leafs, maximum degree $(n-1)$ and diameter $2$ while a path has diameter $n-1$ with maximum degree $2$ and $2$ leafs.
\begin{table}[ht!]
\centering
\caption{\label{tab:constrained_diversity_measures}Diversity (in percent of the maximally achievable value) with respect to overlap diversity $D_o$, maximum degree (\textbf{Max. degree}), number of leafs (\textbf{Leaf}) and the diameter (\textbf{Diameter}) in the constrained setting.}
\renewcommand{\tabcolsep}{4pt}
\renewcommand{\arraystretch}{1.38}
\begin{footnotesize}
\begin{tabular}[t]{rrrrrrrrrrr}
\toprule
\multicolumn{1}{c}{\textbf{ }} & \multicolumn{1}{c}{\textbf{ }} & \multicolumn{1}{c}{\textbf{ }} & \multicolumn{2}{c}{\textbf{$D_o$}} & \multicolumn{2}{c}{\textbf{Max. degree}} & \multicolumn{2}{c}{\textbf{Leaf}} & \multicolumn{2}{c}{\textbf{Diameter}} \\
\cmidrule(l{3pt}r{3pt}){4-5} \cmidrule(l{3pt}r{3pt}){6-7} \cmidrule(l{3pt}r{3pt}){8-9} \cmidrule(l{3pt}r{3pt}){10-11}
$n$ & $\mu$ & $\alpha$ & \textbf{mean} & \textbf{std} & \textbf{mean} & \textbf{std} & \textbf{mean} & \textbf{std} & \textbf{mean} & \textbf{std}\\
\midrule
 &  & 0.05 & 19.12 & 3.96 & 75.00 & 25.43 & 88.33 & 21.51 & 95.00 & 15.26\\

 &  & 0.10 & 24.56 & 4.59 & 75.00 & 25.43 & 95.00 & 15.26 & 96.67 & 12.69\\

 &  & 0.50 & 66.80 & 5.44 & 75.00 & 25.43 & 93.33 & 17.29 & 96.67 & 12.69\\

 & \multirow{-4}{*}{\raggedleft\arraybackslash 2} & 1.00 & 91.29 & 4.55 & 80.00 & 24.91 & 88.33 & 21.51 & 88.33 & 21.51\\

\cline{2-11}
 &  & 0.05 & 22.35 & 1.29 & 18.33 & 5.92 & 42.00 & 9.97 & 43.00 & 11.49\\

 &  & 0.10 & 29.97 & 1.61 & 22.67 & 7.85 & 44.00 & 9.32 & 55.67 & 11.35\\

 &  & 0.50 & 64.57 & 1.06 & 27.33 & 6.40 & 57.67 & 8.98 & 66.67 & 10.93\\

 & \multirow{-4}{*}{\raggedleft\arraybackslash 10} & 1.00 & 83.56 & 0.64 & 29.67 & 6.69 & 56.67 & 9.22 & 66.33 & 9.28\\

\cline{2-11}
 &  & 0.05 & 26.07 & 1.25 & 9.60 & 1.99 & 20.13 & 4.26 & 23.47 & 5.22\\

 &  & 0.10 & 35.13 & 1.04 & 10.93 & 2.08 & 23.60 & 4.74 & 26.53 & 4.87\\

 &  & 0.50 & 67.43 & 0.56 & 12.93 & 3.10 & 32.93 & 4.89 & 36.67 & 3.65\\

\multirow{-12}{*}{\raggedleft\arraybackslash 50} & \multirow{-4}{*}{\raggedleft\arraybackslash 25} & 1.00 & 83.28 & 0.38 & 14.27 & 2.50 & 34.80 & 5.37 & 39.87 & 5.51\\
\cmidrule{1-11}
 &  & 0.05 & 11.68 & 2.58 & 65.00 & 23.30 & 90.00 & 20.34 & 96.67 & 12.69\\

 &  & 0.10 & 19.46 & 3.59 & 65.00 & 23.30 & 96.67 & 12.69 & 91.67 & 18.95\\

 &  & 0.50 & 58.86 & 3.44 & 81.67 & 24.51 & 96.67 & 12.69 & 98.33 & 9.13\\

 & \multirow{-4}{*}{\raggedleft\arraybackslash 2} & 1.00 & 85.79 & 2.69 & 83.33 & 23.97 & 91.67 & 18.95 & 91.67 & 18.95\\

\cline{2-11}
 &  & 0.05 & 15.92 & 1.47 & 17.00 & 5.96 & 44.33 & 10.40 & 46.00 & 11.02\\

 &  & 0.10 & 24.37 & 1.50 & 20.33 & 6.15 & 56.00 & 8.14 & 62.33 & 10.40\\

 &  & 0.50 & 59.07 & 1.05 & 26.00 & 5.63 & 61.00 & 11.25 & 80.00 & 7.88\\

 & \multirow{-4}{*}{\raggedleft\arraybackslash 10} & 1.00 & 80.41 & 0.72 & 32.00 & 6.64 & 67.00 & 11.19 & 75.33 & 10.08\\

\cline{2-11}
 &  & 0.05 & 20.53 & 0.92 & 7.73 & 3.14 & 23.87 & 4.52 & 27.73 & 4.32\\

 &  & 0.10 & 29.57 & 0.96 & 9.07 & 2.33 & 27.47 & 5.82 & 33.60 & 6.36\\

 &  & 0.50 & 63.75 & 0.57 & 12.80 & 1.63 & 40.27 & 5.94 & 53.20 & 5.96\\

 & \multirow{-4}{*}{\raggedleft\arraybackslash 25} & 1.00 & 81.13 & 0.45 & 14.67 & 2.64 & 41.87 & 4.67 & 52.00 & 6.73\\
\cline{2-11}
 &  & 0.05 & 22.90 & 0.77 & 4.13 & 1.48 & 14.47 & 2.91 & 16.40 & 2.99\\

 &  & 0.10 & 32.16 & 0.64 & 4.47 & 1.01 & 17.33 & 3.08 & 21.20 & 3.81\\

 &  & 0.50 & 66.30 & 0.43 & 6.53 & 0.90 & 24.40 & 3.17 & 31.47 & 2.83\\

\multirow{-16}{*}{\raggedleft\arraybackslash 100} & \multirow{-4}{*}{\raggedleft\arraybackslash 50} & 1.00 & 82.17 & 0.20 & 7.60 & 1.33 & 26.20 & 3.17 & 33.67 & 2.41\\
\bottomrule
\end{tabular}
\end{footnotesize}
\end{table}
Table~\ref{tab:unconstrained_diversity_measures} shows mean and standard deviation of diversity (as percentage values) with respect to the maximum degree, the number of leafs and the diameter in the unconstrained case. Due to space limitations we show the statistics for the Poisson-sampling only. For these features diversity is measured as the fraction of distinct values in the final populations. E.g. maximum max-degree diversity is obtained if all trees have different maximum degree.\footnote{Note that in our setup $\mu \leq \lfloor\frac{n}{2}\rfloor < n-2$ and hence maximum diversity could be possible. However, for $\mu$ close to the maximum value we actually do not know if this is achievable in theory.} The values in the table indicate a clear trend: with increasing $\mu$ the mean diversity values decrease. This trend is particularly strong for the maximum degree. Note that this is in line with Corollary~\ref{cor:existence_population_edge_disjoint_trees} where the proof was constructive and the resulting population was a set of edge-disjoint paths all with maximum degree~2. The data suggests, that in order to obtain higher diversity with respect to, e.g. maximum degree, one likely has to sacrifice overlap diversity. This is clearly a multi-objective problem and will be part of our future agenda.


\section{Constrained Diversity Optimization}
\label{sec:sec5}

We now turn our focus to constrained diversity optimization on random Euclidean graphs where nodes are associated with point coordinates in the Euclidean plane. To this end we consider $n \in \{50, 100, 200, 400\}$, $\mu \in \{2, 10, 25, 50, 100, 200\}$ such that $\mu \leq \lfloor\frac{n}{2}\rfloor$ and $\alpha \in \{0.05, 0.10, 0.5, 1\}$. Consistent with the setup in the unconstrained case \EA is run 30 times on each generated instance for at most $\mu n^2$ iterations.
Table~\ref{tab:constrained} shows the results following the format of Table~\ref{tab:uniform_weights}. We show a subset of the results for Uniform[1] and Poisson sampling for $n \in \{50,100\}$; the omitted data shows the same picture.
As expected diversity in most cases increases with increasing $\alpha$ as there is more flexibility in choosing alternative edges (see Figure~\ref{fig:superimposed_edges} for a superimposed visualisation of solutions).

In addition, a distinctive pattern with respect to the sampling strategies can be observed. For low $\alpha$, performing single 1-EX moves seems advantageous. In contrast, performing multiple 1-EX moves mutation leads to better results in terms of overlap diversity for increasing $\alpha$. This seems plausible and in par with our observations in Section~\ref{sec:sec4}: as $\alpha$ increases, so does the size of the feasible search space. Hence, stronger mutation may explore this space more thoroughly as multiple subsequent edge exchanges might be necessary in order to come up with a solution that in fact adheres to the quality constraint. However, this effect diminishes with $\mu$ approaching $\lfloor\frac{n}{2}\rfloor$. We see that single mutation steps become more effective and the gap between single step and multi step mutation decreases even in the case where multi-step mutation is still stronger.

Table~\ref{tab:constrained_diversity_measures} explores diversity with respect to the maximum degree, the number of leafs and the diameter for the constrained setting. For fixed $\alpha$ diversity with respect to all three features decreases with increasing $\mu$ showing an inverse trend in comparison with overlap diversity. In line with the unconstrained case the maximum degree diversity is consistently lowest reaching below $15\%$ for $\mu=\lfloor\frac{n}{2}\rfloor$. Keeping $\mu$ fixed we see a trend towards higher diversity values for increasing $\alpha$. This seems plausible as for low $\alpha$ the final population differs in only few edges (high overlap) and hence the STs are also more likely to be structurally similar.


\section{Conclusion}
\label{sec:sec6}

We studied the minimum spanning tree problem in the context of evolutionary diversity optimization for the first time. Here, the goal is to evolve a population of high quality solutions which all satisfy minimum quality requirements. We studied a baseline $(\mu+1)$ evolutionary algorithm with a diversity measure based on pairwise edge overlap of solutions. Runtime complexity results show that a $(2+1)$-EA is able to find two maximally diverse spanning trees in expected time $O(n^3\log n)$ in the unconstrained setting where $n$ is the number of nodes; one of the first runtime results for evolutionary diversity optimization on well-known combinatorial optimization problems. Complementary experiments suggest that $(\mu+1)$-EA with $\mu$ up to $\lfloor\frac{n}{2}\rfloor$ can efficiently evolve diverse populations of edge-disjoint spanning trees in the unconstrained and the constrained case where in the latter a relaxation of the quality constrained allows for higher diversity. However, increasing population size leads to less diversity with of solutions with respect to other desired tree features: the maximum degree, the number of leafs and the diameter.

Future research endeavors will focus on improving and generalizing the theoretical runtime bounds to strengthen the still rare theoretical foundations of evolutionary diversity optimization on combinatorial optimization problems. Here, in particular an extension towards the constrained case seems interesting and challenging.
In addition, multi-objective approaches which aim to simultaneously maximise several diversity measures appear promising.

\section*{Acknowledgment}
This work was supported by the Australian Research Council through grant DP190103894.

\bibliographystyle{ACM-Reference-Format}
\bibliography{arxiv}


\begin{thebibliography}{32}


\ifx \showCODEN    \undefined \def \showCODEN     #1{\unskip}     \fi
\ifx \showDOI      \undefined \def \showDOI       #1{#1}\fi
\ifx \showISBNx    \undefined \def \showISBNx     #1{\unskip}     \fi
\ifx \showISBNxiii \undefined \def \showISBNxiii  #1{\unskip}     \fi
\ifx \showISSN     \undefined \def \showISSN      #1{\unskip}     \fi
\ifx \showLCCN     \undefined \def \showLCCN      #1{\unskip}     \fi
\ifx \shownote     \undefined \def \shownote      #1{#1}          \fi
\ifx \showarticletitle \undefined \def \showarticletitle #1{#1}   \fi
\ifx \showURL      \undefined \def \showURL       {\relax}        \fi
\providecommand\bibfield[2]{#2}
\providecommand\bibinfo[2]{#2}
\providecommand\natexlab[1]{#1}
\providecommand\showeprint[2][]{arXiv:#2}

\bibitem[\protect\citeauthoryear{Alexander, Kortman, and Neumann}{Alexander
  et~al\mbox{.}}{2017}]%
        {DBLP:conf/gecco/AlexanderKN17}
\bibfield{author}{\bibinfo{person}{Bradley Alexander}, \bibinfo{person}{James
  Kortman}, {and} \bibinfo{person}{Aneta Neumann}.}
  \bibinfo{year}{2017}\natexlab{}.
\newblock \showarticletitle{Evolution of artistic image variants through
  feature based diversity optimisation}. In
  \bibinfo{booktitle}{\emph{Proceedings of the Genetic and Evolutionary
  Computation Conference, {GECCO}}} \emph{(\bibinfo{series}{GECCO '17})},
  \bibfield{editor}{\bibinfo{person}{Peter A.~N. Bosman}} (Ed.).
  \bibinfo{publisher}{{ACM}}, \bibinfo{pages}{171--178}.
\newblock
\urldef\tempurl%
\url{https://doi.org/10.1145/3071178.3071342}
\showDOI{\tempurl}


\bibitem[\protect\citeauthoryear{Alspach, Bermond, and Sotteau}{Alspach
  et~al\mbox{.}}{1990}]%
        {Alspach1990}
\bibfield{author}{\bibinfo{person}{Brian Alspach}, \bibinfo{person}{Jean-Claude
  Bermond}, {and} \bibinfo{person}{Dominique Sotteau}.}
  \bibinfo{year}{1990}\natexlab{}.
\newblock \bibinfo{booktitle}{\emph{Decomposition into cycles I: Hamilton
  decompositions}}.
\newblock \bibinfo{publisher}{Springer Netherlands}, \bibinfo{pages}{9--18}.
\newblock
\showISBNx{978-94-009-0517-7}


\bibitem[\protect\citeauthoryear{Bossek, Grimme, and Neumann}{Bossek
  et~al\mbox{.}}{2019a}]%
        {BNG2019}
\bibfield{author}{\bibinfo{person}{Jakob Bossek}, \bibinfo{person}{Christian
  Grimme}, {and} \bibinfo{person}{Frank Neumann}.}
  \bibinfo{year}{2019}\natexlab{a}.
\newblock \showarticletitle{{On the benefits of biased edge-exchange mutation
  for the multi-criteria spanning tree problem}}. In
  \bibinfo{booktitle}{\emph{Proceedings of the 2019 Genetic and Evolutionary
  Computation Conference}} \emph{(\bibinfo{series}{GECCO '19})}.
  \bibinfo{publisher}{ACM Press}, \bibinfo{address}{New York, New York, USA},
  \bibinfo{pages}{516--523}.
\newblock
\showISBNx{9781450361118}
\urldef\tempurl%
\url{https://doi.org/10.1145/3321707.3321818}
\showDOI{\tempurl}


\bibitem[\protect\citeauthoryear{Bossek, Kerschke, Neumann, Wagner, Neumann,
  and Trautmann}{Bossek et~al\mbox{.}}{2019b}]%
        {Bossek2019Evolving}
\bibfield{author}{\bibinfo{person}{Jakob Bossek}, \bibinfo{person}{Pascal
  Kerschke}, \bibinfo{person}{Aneta Neumann}, \bibinfo{person}{Markus Wagner},
  \bibinfo{person}{Frank Neumann}, {and} \bibinfo{person}{Heike Trautmann}.}
  \bibinfo{year}{2019}\natexlab{b}.
\newblock \showarticletitle{{Evolving diverse TSP instances by means of novel
  and creative mutation operators}}. In \bibinfo{booktitle}{\emph{Proceedings
  of the 15th ACM/SIGEVO Conference on Foundations of Genetic Algorithms}}
  \emph{(\bibinfo{series}{FOGA XV})}. \bibinfo{publisher}{ACM Press},
  \bibinfo{address}{New York, New York, USA}, \bibinfo{pages}{58--71}.
\newblock
\showISBNx{9781450362542}
\urldef\tempurl%
\url{https://doi.org/10.1145/3299904.3340307}
\showDOI{\tempurl}


\bibitem[\protect\citeauthoryear{Chartrand and Lesniak}{Chartrand and
  Lesniak}{1986}]%
        {Chartrand10986GraphsAndDigraphs}
\bibfield{author}{\bibinfo{person}{Gary Chartrand} {and} \bibinfo{person}{Linda
  Lesniak}.} \bibinfo{year}{1986}\natexlab{}.
\newblock \bibinfo{booktitle}{\emph{Graphs \& Digraphs (2nd Ed.)}}.
\newblock \bibinfo{publisher}{Wadsworth Publ. Co.}, \bibinfo{address}{USA}.
\newblock
\showISBNx{0534063241}


\bibitem[\protect\citeauthoryear{Cormen, Leiserson, Rivest, and Stein}{Cormen
  et~al\mbox{.}}{2001}]%
        {cormen01introduction}
\bibfield{author}{\bibinfo{person}{Thomas~H. Cormen},
  \bibinfo{person}{Charles~E. Leiserson}, \bibinfo{person}{Ronald~L. Rivest},
  {and} \bibinfo{person}{Clifford Stein}.} \bibinfo{year}{2001}\natexlab{}.
\newblock \bibinfo{booktitle}{\emph{Introduction to Algorithms}
  (\bibinfo{edition}{2nd} ed.)}.
\newblock \bibinfo{publisher}{The MIT Press}.
\newblock
\showISBNx{0262032937}


\bibitem[\protect\citeauthoryear{Dang, Jansen, and Lehre}{Dang
  et~al\mbox{.}}{2017}]%
        {DBLP:journals/algorithmica/DangJL17}
\bibfield{author}{\bibinfo{person}{Duc{-}Cuong Dang}, \bibinfo{person}{Thomas
  Jansen}, {and} \bibinfo{person}{Per~Kristian Lehre}.}
  \bibinfo{year}{2017}\natexlab{}.
\newblock \showarticletitle{Populations Can Be Essential in Tracking Dynamic
  Optima}.
\newblock \bibinfo{journal}{\emph{Algorithmica}} \bibinfo{volume}{78},
  \bibinfo{number}{2} (\bibinfo{year}{2017}), \bibinfo{pages}{660--680}.
\newblock


\bibitem[\protect\citeauthoryear{Do, Bossek, Neumann, and Neumann}{Do
  et~al\mbox{.}}{2020}]%
        {Do2020EvolvingDiverseTSPTours}
\bibfield{author}{\bibinfo{person}{Anh~Viet Do}, \bibinfo{person}{Jakob
  Bossek}, \bibinfo{person}{Aneta Neumann}, {and} \bibinfo{person}{Frank
  Neumann}.} \bibinfo{year}{2020}\natexlab{}.
\newblock \showarticletitle{Evolving Diverse Sets of Tours for the Travelling
  Salesperson Problem}. In \bibinfo{booktitle}{\emph{Proceedings of the 2020
  Genetic and Evolutionary Computation Conference}} (Canc\'{u}n, Mexico)
  \emph{(\bibinfo{series}{GECCO '20})}. \bibinfo{publisher}{Association for
  Computing Machinery}, \bibinfo{address}{New York, NY, USA},
  \bibinfo{pages}{681–689}.
\newblock
\showISBNx{9781450371285}
\urldef\tempurl%
\url{https://doi.org/10.1145/3377930.3389844}
\showDOI{\tempurl}


\bibitem[\protect\citeauthoryear{{Doerr, B., Neumann, F. (Eds.)}}{{Doerr, B.,
  Neumann, F. (Eds.)}}{2020}]%
        {BookDoeNeu}
\bibfield{author}{\bibinfo{person}{{Doerr, B., Neumann, F. (Eds.)}}.}
  \bibinfo{year}{2020}\natexlab{}.
\newblock \bibinfo{booktitle}{\emph{Theory of Evolutionary Computation --
  Recent Developments in Discrete Optimization}}.
\newblock \bibinfo{publisher}{Springer}.
\newblock
\showISBNx{978-3-030-29413-7}
\urldef\tempurl%
\url{https://doi.org/10.1007/978-3-030-29414-4}
\showDOI{\tempurl}


\bibitem[\protect\citeauthoryear{Friedrich and Neumann}{Friedrich and
  Neumann}{2015}]%
        {DBLP:journals/ec/FriedrichN15}
\bibfield{author}{\bibinfo{person}{Tobias Friedrich} {and}
  \bibinfo{person}{Frank Neumann}.} \bibinfo{year}{2015}\natexlab{}.
\newblock \showarticletitle{Maximizing Submodular Functions under Matroid
  Constraints by Evolutionary Algorithms}.
\newblock \bibinfo{journal}{\emph{Evolutionary Computation}}
  \bibinfo{volume}{23}, \bibinfo{number}{4} (\bibinfo{year}{2015}),
  \bibinfo{pages}{543--558}.
\newblock


\bibitem[\protect\citeauthoryear{Gao, Nallaperuma, and Neumann}{Gao
  et~al\mbox{.}}{2016}]%
        {DBLP:conf/ppsn/GaoNN16}
\bibfield{author}{\bibinfo{person}{Wanru Gao}, \bibinfo{person}{Samadhi
  Nallaperuma}, {and} \bibinfo{person}{Frank Neumann}.}
  \bibinfo{year}{2016}\natexlab{}.
\newblock \showarticletitle{Feature-based diversity optimization for problem
  instance classification}. In \bibinfo{booktitle}{\emph{Parallel Problem
  Solving from Nature, PPSN}} \emph{(\bibinfo{series}{LNCS},
  Vol.~\bibinfo{volume}{9921})}. \bibinfo{publisher}{Springer},
  \bibinfo{pages}{869--879}.
\newblock
\showISBNx{978-3-319-45822-9}
\urldef\tempurl%
\url{https://doi.org/10.1007/978-3-319-45823-6\_81}
\showDOI{\tempurl}


\bibitem[\protect\citeauthoryear{Gao and Neumann}{Gao and Neumann}{2014}]%
        {DBLP:conf/gecco/GaoN14}
\bibfield{author}{\bibinfo{person}{Wanru Gao} {and} \bibinfo{person}{Frank
  Neumann}.} \bibinfo{year}{2014}\natexlab{}.
\newblock \showarticletitle{Runtime analysis for maximizing population
  diversity in single-objective optimization}. In
  \bibinfo{booktitle}{\emph{Proceedings of the 2014 Genetic and Evolutionary
  Computation Conference}} \emph{(\bibinfo{series}{GECCO '14})}.
  \bibinfo{publisher}{{ACM}}, \bibinfo{pages}{777--784}.
\newblock


\bibitem[\protect\citeauthoryear{Gao, Pourhassan, and Neumann}{Gao
  et~al\mbox{.}}{2015}]%
        {DBLP:conf/gecco/GaoPN15}
\bibfield{author}{\bibinfo{person}{Wanru Gao}, \bibinfo{person}{Mojgan
  Pourhassan}, {and} \bibinfo{person}{Frank Neumann}.}
  \bibinfo{year}{2015}\natexlab{}.
\newblock \showarticletitle{Runtime Analysis of Evolutionary Diversity
  Optimization and the Vertex Cover Problem}. In
  \bibinfo{booktitle}{\emph{Proceedings of the 2015 Genetic and Evolutionary
  Computation Conference (Companion)}} \emph{(\bibinfo{series}{GECCO '15})}.
  \bibinfo{publisher}{{ACM}}, \bibinfo{pages}{1395--1396}.
\newblock


\bibitem[\protect\citeauthoryear{Gouveia and Magnanti}{Gouveia and
  Magnanti}{2003}]%
        {GM2003NetworkFlowModels}
\bibfield{author}{\bibinfo{person}{Luis Gouveia} {and}
  \bibinfo{person}{Thomas~L. Magnanti}.} \bibinfo{year}{2003}\natexlab{}.
\newblock \showarticletitle{Network flow models for designing
  diameter-constrained minimum-spanning and Steiner trees}.
\newblock \bibinfo{journal}{\emph{Networks}} \bibinfo{volume}{41},
  \bibinfo{number}{3} (\bibinfo{year}{2003}), \bibinfo{pages}{159--173}.
\newblock
\urldef\tempurl%
\url{https://doi.org/10.1002/net.10069}
\showDOI{\tempurl}
\showeprint{https://onlinelibrary.wiley.com/doi/pdf/10.1002/net.10069}


\bibitem[\protect\citeauthoryear{Jansen}{Jansen}{2013}]%
        {ncs/Jansen13}
\bibfield{author}{\bibinfo{person}{Thomas Jansen}.}
  \bibinfo{year}{2013}\natexlab{}.
\newblock \bibinfo{booktitle}{\emph{Analyzing Evolutionary Algorithms - The
  Computer Science Perspective}}.
\newblock \bibinfo{publisher}{Springer}. 1--236 pages.
\newblock
\showISBNx{978-3-642-17338-7}


\bibitem[\protect\citeauthoryear{Kerschke, Hoos, Neumann, and
  Trautmann}{Kerschke et~al\mbox{.}}{2019}]%
        {DBLP:journals/ec/KerschkeHNT19}
\bibfield{author}{\bibinfo{person}{Pascal Kerschke}, \bibinfo{person}{Holger~H.
  Hoos}, \bibinfo{person}{Frank Neumann}, {and} \bibinfo{person}{Heike
  Trautmann}.} \bibinfo{year}{2019}\natexlab{}.
\newblock \showarticletitle{Automated Algorithm Selection: Survey and
  Perspectives}.
\newblock \bibinfo{journal}{\emph{Evolutionary Computation}}
  \bibinfo{volume}{27}, \bibinfo{number}{1} (\bibinfo{year}{2019}),
  \bibinfo{pages}{3--45}.
\newblock
\urldef\tempurl%
\url{https://doi.org/10.1162/evco\_a\_00242}
\showDOI{\tempurl}


\bibitem[\protect\citeauthoryear{Kruskal}{Kruskal}{1956}]%
        {Kruskal56}
\bibfield{author}{\bibinfo{person}{Joseph~B. Kruskal}.}
  \bibinfo{year}{1956}\natexlab{}.
\newblock \showarticletitle{On the Shortest Spanning Subtree of a Graph and the
  Traveling Salesman Problem}.
\newblock \bibinfo{journal}{\emph{Proc. Amer. Math. Soc.}} \bibinfo{volume}{7},
  \bibinfo{number}{1} (\bibinfo{year}{1956}), \bibinfo{pages}{48--50}.
\newblock
\showISSN{00029939, 10886826}


\bibitem[\protect\citeauthoryear{Lang, Bischl, and Surmann}{Lang
  et~al\mbox{.}}{2017}]%
        {Rbatchtools}
\bibfield{author}{\bibinfo{person}{Michel Lang}, \bibinfo{person}{Bernd
  Bischl}, {and} \bibinfo{person}{Dirk Surmann}.}
  \bibinfo{year}{2017}\natexlab{}.
\newblock \showarticletitle{batchtools: Tools for R to work on batch systems}.
\newblock \bibinfo{journal}{\emph{The Journal of Open Source Software}}
  \bibinfo{volume}{2}, \bibinfo{number}{10} (\bibinfo{date}{feb}
  \bibinfo{year}{2017}).
\newblock
\urldef\tempurl%
\url{https://doi.org/10.21105/joss.00135}
\showDOI{\tempurl}


\bibitem[\protect\citeauthoryear{Lehman and Stanley}{Lehman and
  Stanley}{2011}]%
        {lehman2011evolving}
\bibfield{author}{\bibinfo{person}{Joel Lehman} {and}
  \bibinfo{person}{Kenneth~O Stanley}.} \bibinfo{year}{2011}\natexlab{}.
\newblock \showarticletitle{Evolving a diversity of virtual creatures through
  novelty search and local competition}. In
  \bibinfo{booktitle}{\emph{Proceedings of the 2011 Genetic and Evolutionary
  Computation Conference}}. ACM, \bibinfo{pages}{211--218}.
\newblock


\bibitem[\protect\citeauthoryear{Neumann, Gao, Doerr, Neumann, and
  Wagner}{Neumann et~al\mbox{.}}{2018}]%
        {DBLP:journals/corr/abs-1802-05448}
\bibfield{author}{\bibinfo{person}{Aneta Neumann}, \bibinfo{person}{Wanru Gao},
  \bibinfo{person}{Carola Doerr}, \bibinfo{person}{Frank Neumann}, {and}
  \bibinfo{person}{Markus Wagner}.} \bibinfo{year}{2018}\natexlab{}.
\newblock \showarticletitle{Discrepancy-based evolutionary diversity
  optimization}. In \bibinfo{booktitle}{\emph{Proceedings of the 2018 Genetic
  and Evolutionary Computation Conference}} \emph{(\bibinfo{series}{GECCO
  '18})}. \bibinfo{publisher}{{ACM}}, \bibinfo{pages}{991--998}.
\newblock
\urldef\tempurl%
\url{https://doi.org/10.1145/3205455.3205532}
\showDOI{\tempurl}


\bibitem[\protect\citeauthoryear{Neumann, Gao, Wagner, and Neumann}{Neumann
  et~al\mbox{.}}{2019}]%
        {DBLP:conf/gecco/NeumannG0019}
\bibfield{author}{\bibinfo{person}{Aneta Neumann}, \bibinfo{person}{Wanru Gao},
  \bibinfo{person}{Markus Wagner}, {and} \bibinfo{person}{Frank Neumann}.}
  \bibinfo{year}{2019}\natexlab{}.
\newblock \showarticletitle{Evolutionary diversity optimization using
  multi-objective indicators}. In \bibinfo{booktitle}{\emph{Proceedings of the
  2019 Genetic and Evolutionary Computation Conference}}
  \emph{(\bibinfo{series}{GECCO '19})}. \bibinfo{publisher}{{ACM}},
  \bibinfo{pages}{837--845}.
\newblock
\showISBNx{978-1-4503-6111-8}
\urldef\tempurl%
\url{https://doi.org/10.1145/3321707.3321796}
\showDOI{\tempurl}


\bibitem[\protect\citeauthoryear{Neumann and Neumann}{Neumann and
  Neumann}{2020}]%
        {DBLP:conf/ppsn/NeumannN20}
\bibfield{author}{\bibinfo{person}{Aneta Neumann} {and} \bibinfo{person}{Frank
  Neumann}.} \bibinfo{year}{2020}\natexlab{}.
\newblock \showarticletitle{Optimising Monotone Chance-Constrained Submodular
  Functions Using Evolutionary Multi-objective Algorithms}. In
  \bibinfo{booktitle}{\emph{Proceedings of the Parallel Problem Solving from
  Nature, {PPSN} 2020}} \emph{(\bibinfo{series}{Lecture Notes in Computer
  Science}, Vol.~\bibinfo{volume}{12269})}. \bibinfo{publisher}{Springer},
  \bibinfo{pages}{404--417}.
\newblock


\bibitem[\protect\citeauthoryear{Neumann and Witt}{Neumann and Witt}{2010}]%
        {DBLP:books/daglib/0025643}
\bibfield{author}{\bibinfo{person}{Frank Neumann} {and}
  \bibinfo{person}{Carsten Witt}.} \bibinfo{year}{2010}\natexlab{}.
\newblock \bibinfo{booktitle}{\emph{Bioinspired Computation in Combinatorial
  Optimization}}.
\newblock \bibinfo{publisher}{Springer}.
\newblock
\showISBNx{978-3-642-16543-6}
\urldef\tempurl%
\url{https://doi.org/10.1007/978-3-642-16544-3}
\showDOI{\tempurl}


\bibitem[\protect\citeauthoryear{Prim}{Prim}{1957}]%
        {Prim57}
\bibfield{author}{\bibinfo{person}{R.~C. Prim}.}
  \bibinfo{year}{1957}\natexlab{}.
\newblock \showarticletitle{Shortest Connection Networks And Some
  Generalizations}.
\newblock \bibinfo{journal}{\emph{Bell System Technical Journal}}
  \bibinfo{volume}{36}, \bibinfo{number}{6} (\bibinfo{year}{1957}),
  \bibinfo{pages}{1389--1401}.
\newblock
\showISSN{1538-7305}
\urldef\tempurl%
\url{https://doi.org/10.1002/j.1538-7305.1957.tb01515.x}
\showDOI{\tempurl}


\bibitem[\protect\citeauthoryear{Pugh, Soros, and Stanley}{Pugh
  et~al\mbox{.}}{2016}]%
        {pugh2016extended}
\bibfield{author}{\bibinfo{person}{Justin~K Pugh}, \bibinfo{person}{Lisa~B
  Soros}, {and} \bibinfo{person}{Kenneth~O Stanley}.}
  \bibinfo{year}{2016}\natexlab{}.
\newblock \showarticletitle{An Extended Study of Quality Diversity Algorithms}.
  In \bibinfo{booktitle}{\emph{Proceedings of the 2016 on Genetic and
  Evolutionary Computation Conference Companion}}. ACM,
  \bibinfo{pages}{19--20}.
\newblock


\bibitem[\protect\citeauthoryear{Qian, Shi, Yu, and Tang}{Qian
  et~al\mbox{.}}{2017}]%
        {DBLP:conf/ijcai/QianSYT17}
\bibfield{author}{\bibinfo{person}{Chao Qian}, \bibinfo{person}{Jing{-}Cheng
  Shi}, \bibinfo{person}{Yang Yu}, {and} \bibinfo{person}{Ke Tang}.}
  \bibinfo{year}{2017}\natexlab{}.
\newblock \showarticletitle{On Subset Selection with General Cost Constraints}.
  In \bibinfo{booktitle}{\emph{Proceedings of the 2017 International Joint
  Conference on Artificial Intelligence}} \emph{(\bibinfo{series}{IJCAI '19})}.
  \bibinfo{pages}{2613--2619}.
\newblock


\bibitem[\protect\citeauthoryear{Qian, Yu, and Zhou}{Qian
  et~al\mbox{.}}{2015}]%
        {DBLP:conf/nips/QianYZ15}
\bibfield{author}{\bibinfo{person}{Chao Qian}, \bibinfo{person}{Yang Yu}, {and}
  \bibinfo{person}{Zhi{-}Hua Zhou}.} \bibinfo{year}{2015}\natexlab{}.
\newblock \showarticletitle{Subset Selection by {P}areto Optimization}. In
  \bibinfo{booktitle}{\emph{Proceedings of the 28th International Conference on
  Neural Information Processing Systems - Volume 1, NIPS 2015}}.
  \bibinfo{pages}{1774--1782}.
\newblock


\bibitem[\protect\citeauthoryear{{R Core Team}}{{R Core Team}}{2019}]%
        {Rlang}
\bibfield{author}{\bibinfo{person}{{R Core Team}}.}
  \bibinfo{year}{2019}\natexlab{}.
\newblock \bibinfo{booktitle}{\emph{R: A Language and Environment for
  Statistical Computing}}.
\newblock R Foundation for Statistical Computing, Vienna, Austria.
\newblock
\urldef\tempurl%
\url{https://www.R-project.org/}
\showURL{%
\tempurl}


\bibitem[\protect\citeauthoryear{Raidl, Koller, and Julstrom}{Raidl
  et~al\mbox{.}}{2006}]%
        {RKJ2006_BiasedMutationOperators}
\bibfield{author}{\bibinfo{person}{G{\"{u}}nther~R Raidl},
  \bibinfo{person}{Gabriele Koller}, {and} \bibinfo{person}{Bryant~A
  Julstrom}.} \bibinfo{year}{2006}\natexlab{}.
\newblock \showarticletitle{{Biased Mutation Operators for Subgraph-Selection
  Problems}}.
\newblock \bibinfo{journal}{\emph{IEEE Transactions on Evolutionary
  Computation}} \bibinfo{volume}{10}, \bibinfo{number}{2}
  (\bibinfo{year}{2006}), \bibinfo{pages}{145--156}.
\newblock


\bibitem[\protect\citeauthoryear{Roostapour, Neumann, Neumann, and
  Friedrich}{Roostapour et~al\mbox{.}}{2019}]%
        {DBLP:conf/aaai/RoostapourN0019}
\bibfield{author}{\bibinfo{person}{Vahid Roostapour}, \bibinfo{person}{Aneta
  Neumann}, \bibinfo{person}{Frank Neumann}, {and} \bibinfo{person}{Tobias
  Friedrich}.} \bibinfo{year}{2019}\natexlab{}.
\newblock \showarticletitle{Pareto Optimization for Subset Selection with
  Dynamic Cost Constraints}. In \bibinfo{booktitle}{\emph{Proceedings of the
  2019 AAAI Conference on Artificial Intelligence}}
  \emph{(\bibinfo{series}{AAAI '19})}. \bibinfo{publisher}{{AAAI} Press},
  \bibinfo{pages}{2354--2361}.
\newblock


\bibitem[\protect\citeauthoryear{Storch}{Storch}{2008}]%
        {DBLP:journals/ec/Storch08}
\bibfield{author}{\bibinfo{person}{Tobias Storch}.}
  \bibinfo{year}{2008}\natexlab{}.
\newblock \showarticletitle{On the Choice of the Parent Population Size}.
\newblock \bibinfo{journal}{\emph{Evolutionary Computation}}
  \bibinfo{volume}{16}, \bibinfo{number}{4} (\bibinfo{year}{2008}),
  \bibinfo{pages}{557--578}.
\newblock


\bibitem[\protect\citeauthoryear{Ulrich and Thiele}{Ulrich and Thiele}{2011}]%
        {DBLP:conf/gecco/UlrichT11}
\bibfield{author}{\bibinfo{person}{Tamara Ulrich} {and} \bibinfo{person}{Lothar
  Thiele}.} \bibinfo{year}{2011}\natexlab{}.
\newblock \showarticletitle{Maximizing population diversity in single-objective
  optimization}. In \bibinfo{booktitle}{\emph{Proceedings of the 2011 Genetic
  and Evolutionary Computation Conference}} \emph{(\bibinfo{series}{GECCO
  '11})}. \bibinfo{publisher}{ACM}, \bibinfo{pages}{641--648}.
\newblock


\end{thebibliography}

\end{document}